\newtheorem{definition}{Definition}
\newtheorem{assumption}{Assumption}
\newtheorem{lemma}{Lemma}
\newtheorem{proposition}{Proposition}
\newtheorem{corollary}{Corollary}
\newtheorem{remark}{Remark}
\newtheorem{example}{Example}
\Crefname{lemma}{Lemma}{Lemmas}
\crefname{section}{Section}{Sections}
\crefname{assumption}{Assumption}{Assumptions}
\crefname{proposition}{Proposition}{Propositions}
\crefname{corollary}{Corollary}{Corollaries}
\crefname{theorem}{Theorem}{Theorems}
\crefname{definition}{Definition}{Definitions}
\crefname{subsection}{Subsection}{Subsections}
\newcommand{\lp}{\left(}
\newcommand{\rp}{\right)}
\newcommand{\ds}{\displaystyle}
\newcommand{\pd}{\delta}
\newcommand{\pl}{\ell}
\DeclareMathOperator{\SPAN}{Span}
\DeclareMathOperator{\KER}{Ker}
\newcommand{\Span}{\SPAN}
\newcommand{\Ker}{\KER}
\newcommand{\pc}{\mathcal{C}^1_{pc}(0,1)}
\Crefname{appsec}{appendix}{appendices}
\journal{Neural Networks}
\begin{document}

\begin{frontmatter}



\title{A singular Riemannian geometry approach to Deep Neural Networks I. Theoretical foundations.}


\author[ESP,GNCS]{Alessandro Benfenati\corref{cor1}}
\ead{alessandro.benfenati@unimi.it}
\ead{https://sites.unimi.it/a_benfenati/}
\cortext[cor1]{Corresponding Author}

\author[AM,GNFM,INFN]{Alessio Marta}
\ead{alessio.marta@unimi.it}

\affiliation[ESP]{organization={Environmental Science and Policy Department, Università di Milano},
            addressline={Via Celoria 2}, 
            city={Milano},
            postcode={20133}, 
            country={Italy}}
            
\affiliation[GNCS]{organization={Gruppo Nazionale Calcolo Scientifico},
					addressline={INDAM},country={Italy}}   
            
\affiliation[AM]{organization={Dipartimento di Matematica, Universit{\`a} degli Studi di Milano},
            addressline={Via Cesare Saldini 50}, 
            city={Milan},
            postcode={20133}, 
            country={Italy}}
\affiliation[GNFM]{organization={Gruppo Nazionale per la Fisica Matematica},
					addressline={INDAM},country={Italy}}            
\affiliation[INFN]{organization={Istituto Nazionale di Fisica Nucleare, sezione di Milano},
					addressline={INFN},country={Italy}}            

\begin{abstract}
Deep Neural Networks are widely used for solving complex problems in several scientific areas, such as speech recognition, machine translation, image analysis. The strategies employed to investigate their theoretical properties mainly rely on Euclidean geometry, but in the last years new approaches based on Riemannian geometry have been developed. Motivated by some open problems, we study a particular sequence of maps between manifolds, with the last manifold of the sequence equipped with a Riemannian metric. We investigate the structures induced through pullbacks on the other manifolds of the sequence and on some related quotients. In particular, we show that the pullbacks of the final Riemannian metric to any manifolds of the sequence is a degenerate Riemannian metric inducing a structure of pseudometric space. We prove that the Kolmogorov quotient of this pseudometric space yields a smooth manifold, which is the base space of a particular vertical bundle. We investigate the theoretical properties of the maps of such sequence, eventually we focus on the case of maps between manifolds implementing neural networks of practical interest and we present some applications of the geometric framework we introduced in the first part of the paper.
\end{abstract}


\begin{highlights}
\item We consider Neural Networks as sequences of smooth maps between manifolds 
\item We investigate the structures induced through pull-backs on the manifolds 
\item The pull-backs of the final Riemannian metric to any manifolds is a degenerate metric
\item The theoretical study leads to construct equivalence classes in the input manifold
\end{highlights}

\begin{keyword}


Deep Learning\sep Riemann Geometry\sep Classification\sep Degenerate metrics\sep Neural Networks
\end{keyword}

\end{frontmatter}


\section{Introduction}

In the last few years Deep Neural Networks (DNNs) has played a pivotal role in machine learning, both in theoretical studies and practical applications. Thanks to the growing computational power of GPUs in the last decade, the domain of applications of DNNs has extended to a wide variety of complex tasks, as speech recognition \cite{Abdelaziz18,RFP20,LI2021225}, machine translation \cite{Singh17,SYX20,IRANZOSANCHEZ2021303}, image analysis \cite{AN19,LL19}, autonomous driving \cite{HC20} and pattern recognition \cite{Gao20,Bi06}. The vast majority of the theoretical results about Shallow and Deep Neural Networks are obtained in the Euclidean setting, namely viewing input data as points immersed in $\mathbb{R}^n$, for $n$ large enough \cite{HSW89,KT20,Kr20}. This geometric framework implies that there is a natural notion - at least from a geometric point of view - of distance between points which, however, is not always meaningful to describe data because they may live on lower-dimensional manifold, for which the Euclidean distance does not carry useful information. A simple example is that of a horseshoe-shaped surface in $\mathbb{R}^3$: Two points on different extrema of the horseshoe may be close with respect to the Euclidean distance, but are actually far away if one must connect them with a line lying on the surface. Indeed, in many cases of practical interest, one must study data whose underlying structure is that of a non-Euclidean space, the most prominent case is that of data lying on a graph. Some examples are 2D meshed surfaces in computer graphics \cite{AS19,Bos16} or weighted graphs in social networks analysis \cite{Chunaev20}. To be concrete, consider a cloud of points which can be studied using a weighted graph. A notion of distance between points can be built out of the weights, once a measure of similarity or closedness of two points is given - a notable application being PageRank, measuring how two web pages are close. Appealing to the manifold hypothesis, which states that data lie on a low-dimensional manifold embedded in a high dimensional space, it can be convenient to find a low-dimensional representation preserving the most important information before analysing those data. This is the main goal in manifold learning, where one wants to find a low-dimensional representation of a cloud of points starting, for example, from a nearest neighbours graph. Algorithms such as local linear embedding, Laplacian Eigenmap, Diffusion maps, Isomap yield different low-dimensional representations, which may be more or less apt to the task at hand, trying to find the best low-dimensional representation preserving some notion of distance between the points \cite{PJ13,RSE21,TG20}. In particular Diffusion maps and Isomap build an approximation of the (non Euclidean) Riemannian metric of the representation. In supervised training, however, we may not have at our disposal an intrinsic notion of distance between features, but we have to resort to the labels to give a notion of closeness between points. Roughly speaking we can say that two input data are close if so are their labels. For example, in the MNIST handwritten digits dataset, data are embedded in a 784-dimensional Euclidean space, but the Euclidean distance of this high dimensional space carries little information about the contents of the images.  We can say that two images of the digit '5' are way closer that the image of a five and that of a zero. In particular, we can employ the Euclidean distance between labels to say if the corresponding 28$\times$28 images are similar or not. In this framework, if the manifold on which the labels are lying has dimension 10, through the notion of distance between input data previously given, we induced a 10-dimensional metric space on the manifold of initial data, namely a 10-dimensional representation. In literature the set of the techniques generalizing DNNs to non-Euclidean domains have been object of investigation lately, often under the umbrella term of Geometric Deep Learning \cite{BBLSV17,HBL15,MBM17}.

In general, when we are dealing with data whose underlying structure is that of a non-Euclidean space, a convenient geometric setting is that of Riemannian geometry. In this paper we introduce a geometric framework -- employing singular Riemannian geometry -- which can be used to study fully connected neural networks acting on non-Euclidean data. The idea that a neural network can be seen as a sequence of maps between smooth manifolds is at the core of some recent works \cite{Shen18,HaRa17}. The other main idea underlying this work, namely that a neural network can be analysed using Riemannian geometry, was first proposed in \cite{HaRa17}. Our starting point is a definition of neural network as a sequence of maps between manifolds, heavily influenced by \cite{Shen18} and \cite{HaRa17}. In particular, we consider sequences of smooth maps between manifolds of the form
\begin{equation}\label{def:sequence_of_maps_intro}
\begin{tikzcd}
M_{0} \arrow[r, "\Lambda_1"] & M_{1} \arrow[r, "\Lambda_2"] & M_{2} \arrow[r,"\Lambda_3"]  & \cdots \arrow[r,"\Lambda_{n-1}"] & M_{n-1} \arrow[r, "\Lambda_n"] & M_n
\end{tikzcd}
\end{equation}
with the output manifold $M_n$ a Riemannian manifold with metric $g_n$ such that \eqref{def:sequence_of_maps_intro} satisfies suitable hypotheses.
We are interested in studying the structures induced on the manifolds (and on some quotient spaces) by the pullbacks of the Riemannian metric $g_n$ through the maps $\Lambda_i$, which in general are singular (or degenerate) metrics, geometric objects appearing also in other field of differential geometry and of its applications, as in the study of almost-product manifolds or light like submanifolds in general relativity \cite{Bor30,Jan54,Shi74,Bel74,Spe79,BaDu96,Ku96,Sha04}. These degenerate metrics allow us to endow each manifolds of the sequence \eqref{def:sequence_of_maps_intro} with the structure of pseudometric space, from which we can obtain a full-fledged metric space by means of a metric identification. In particular, we show that the quotients $M_{i}/\sim_i$, with $\sim_i$ being the equivalence relation identifying two points of $M_i$ whose pseudodistance is zero, are still smooth manifolds and that the maps $\mathcal{N}_i=\Lambda_n \circ \cdots \circ \Lambda_i$ preserve the lengths of curves, in the sense that, given a curve $\gamma:(0,1)\rightarrow M_i$, then the lengths of $\gamma$ and of $\mathcal{N}_i\circ \gamma$ are the same. The analysis of the sequence in \eqref{def:sequence_of_maps_intro} yields interesting results about deep neural networks on non-Euclidean data. The most notable results of our analysis are that all the points in the same class of equivalence with respect to the equivalence relation $\sim_i$ are mapped by the neural network to the same output and the fact that the pseudodistance between points in the same class of equivalence is zero. 

Beside the theoretical insights beyond these observations and results, there are also interesting applications in real--world problems. 

First, we note that the classes of equivalence of the input manifold $M_0$ carry some information about how the neural network sees the input manifold. In particular if $[x]$ is the equivalence class of a point $x$ in the input manifold $M_0$, then the output of the neural network is the same for every point $y \in [x]$. This shows that a characteristic of neural networks, in addition to perform some complex operations on input data, is to identify points on the input manifold. The knowledge of how points in $M_0$ are identified may be useful, for example, to study how much a classifier is resistant to noise in the labels. 

Another application of the equivalence classes is to build level sets of a neural network trained to perform a non-linear regression task, as proposed in \Cref{subsec:level} (see also  \cite[Section 5.1.1]{BeMa21} for an application in thermodynamics). At last we note that, fixing an input point $x$ in the input manifold $M_0$ and allowing the weights and the biases of the first layer to change, we can see a neural network as the following sequence of maps between manifolds
\begin{equation*}
\begin{tikzcd}
\Omega_{0} \arrow[r, "\Lambda_1"] & M_{1} \arrow[r, "\Lambda_2"] & M_{2} \arrow[r,"\Lambda_3"]  & \cdots \arrow[r,"\Lambda_{n-1}"] & M_{n-1} \arrow[r, "\Lambda_n"] & M_n
\end{tikzcd}
\end{equation*}
where $\Omega_0 = \mathbb{R}^k$, with $k$ a suitable natural number, is the set of values the weights and biases of the first layer can assume. This observation allow us to apply the geometric framework we developed to move along the weights and the biases corresponding with the same minimum, possibly paving the way to find new minima together with usual optimization algorithms, as gradient descent. 


The results of this work naturally lead to develop the Singular Metric Equivalence Class (SiMEC), which builds the class of equivalence of points in the input manifold: the detailed description, the implementation and the experimentation of this algorithm are presented in \cite{BeMa21}. 

The structure of the paper is the following. In \Cref{sec:basic_definitions} we review some basic notions of Riemannian geometry. In \Cref{sec:geometric_framework} we introduce the geometric setting we employ and we give the geometric definitions of smooth layer and smooth neural network. \Cref{sec:main} is devoted to the study of geometric properties of sequences of maps between manifolds in a slightly more general settings compared to the one of \Cref{sec:geometric_framework}. At last, in \Cref{sec:Application to Neural Networks}, we discuss some new interesting results which are direct consequences of the analysis carried out in \Cref{sec:main} and their subsequent application in machine learning problems.
\medskip

\paragraph{Notations.} $\mathbb{R}^n$ is real vector space, whose elements have $n$ elements. $\Ker(f)$ denotes the kernel of the linear application $f$; $A \oplus B$ is the direct sum of $A$ and $B$, $\ds\bigsqcup_iA_i$ denotes the disjoint union of the collection of sets $A_i$; Consider a parametric function $f:\mathbb{R}\rightarrow \mathbb{R}^n$, $t \mapsto \left(f_1(t),f_2(t),\cdots,f_n(t)\right)$, for some $n \in \mathbb{N}$: we say that $f$ is a differentiable function if and only if $\exists \displaystyle\frac{df_i}{dt}(t)\,\forall t \in\text{dom}(f)$, and we denote with $\dot{f}$ the derivative with respect to $t$, namely $\dot{f}(t) = \left(\ds\frac{df_1}{dt}(t),\ds\frac{df_2}{dt}(t),\cdots,\ds\frac{df_n}{dt}(t)\right)$; A map $f$ between two sets $A$ and $B$ is a function from $A$ to $B$. If $f: \mathbb{R}^m \rightarrow \mathbb{R}^m$ is a vector--valued function, we denote the $k$--th component of $f$ with $f_k$; The set of realt matrices is denoted with $\mathbb{R}^{m,n}$. Given a matrix $A$, we denote the element in row $i$ and column $j$ with $A_{ij}$; If $v$ is a vector in $\mathbb{R}^n$, we denote the $i$--th component with $v_i$. If $A$ is a $m \times n$ matrix and $v \in \mathbb{R}^n$ is a vector, then the $i$--th component of $w=Av$ is $w_i = \ds\sum_j A_{ij}v_j$. $A^{(i)}$ denotes the matrix associated to the $i$--th layer of a Neural Network. The notation $\pc$ denotes the set of piece--wise $\mathcal{C}^1$ functions defined on $[0,1]$.

\section{Preliminaries}
\label{sec:basic_definitions}
We begin reviewing some basic notions about manifolds and tangent spaces. We then restate in our notation some classic definitions and results from Riemannian Geometry. The interested reader may find further details in any standard textbook about differential geometry (e.g \cite{doCarmo16}, \cite{Tu11} or \cite{Spi99}).
\subsection{Basic notions in differential geometry}
\label{subsec:basic_differential_geometry}
Intuitively, the notion of smooth manifold can be seen as a generalization of that of surface in $\mathbb{R}^3$. To formalize the notion of manifold, we recall the following definitions.
\begin{definition}
A topological space $\mathcal{T}$ is second countable if there exists a countable collection $\mathcal{U}=\{U_{i}\}_{i=1}^{\infty }$ of open subsets of $\mathcal{T}$ such that any open subset of $\mathcal{T}$ can be written as a union of elements of $\mathcal{U}$.
A topological space $\mathcal{T}$ is a Hausdorff space, or $T_2$, if for any two distinct points $p,q \in \mathcal{T}$ there exist two neighbourhoods $U_p,U_q$ such that $U_p \cap U_q = \emptyset$.
\end{definition}
We define a smooth manifold as follows:
\begin{definition}\label{def:manifold}
A smooth d-dimensional manifold $M$ is a second countable and Hausdorff topological space such that every point $p \in M$ has a neighbourhood $U_p$ that is homeomorphic to a subset of $\mathbb{R}^d$ through a map $\phi_p:U_p \rightarrow \mathbb{R}^{d}$, with the additional requirement that for $p, q\in M$ if $U_p \cap U_q \neq \emptyset$, then $\phi_p \circ \phi_q^{-1}$ is a smooth diffeomorphism. The pair $(U_p,\phi_p)$ is called a local chart and the collection of all the possible local charts at all points is called atlas.
\end{definition}
If we can cover the whole manifold $M$ with a chart, we say that we have a global chart for the manifold, or a global coordinate system.
\begin{example}
The Euclidean space $\mathbb{R}^n$ and the half space $\mathbb{R_+}\times\mathbb{R}^{n-1}$ are trivial examples of manifolds covered by a global chart. The unit circle $S^1$ is an example of a manifold not admitting a global chart.
\end{example}
Given a manifold $M$, a subset $N \subset M$ which is a manifold itself is called submanifold of $M$.
\begin{remark}
The definition given above is not the usual - and more general - one which can be found on the majority of differential geometry books, but it will suffice for our purposes. In particular, in the following we shall always work with manifolds admitting a global chart, which is tantamount to require that all the manifolds will be diffeomorphic to $\mathbb{R}^n$ for some $n \in \mathbb{N}$.
\end{remark}


Given a surface in $\mathbb{R}^3$, we can consider the tangent plane at a point as a linear approximation of the surface; Its counterpart on a manifold is the notion of tangent space at a point. There are several (equivalent) definitions of tangent space at a point. We chose to define it via tangent curves both for its simplicity and because it shall come in handy later.
Let $p \in M$ and consider a chart $(U,\phi)$ containing $p$. A smooth curve $\gamma$ is a curve such that given the chart $(U,\phi)$ the composition $\phi \circ \gamma$ is smooth. Consider two smooth curves $\gamma_1,\gamma_2:[-1,1] \rightarrow U$ such that $\gamma_1(0)=\gamma_2(0)=p$. We say that $\gamma_1$ is equivalent to $\gamma_2$ at $p$ if $\displaystyle{\frac{d}{dt}\left(\phi \circ \gamma_1\right)(t)|_{t=0}=\frac{d}{dt}\left(\phi \circ \gamma_2\right)(t)|_{t=0}}$. See also \Cref{fig:tangent_vector_curves} for a picture of this identification between curves over a point.
\begin{definition}\label{def:tangent_vectors}
A tangent vector $\dot{\gamma}(0)$ over $p$ is an equivalence classes of curves; The tangent space of $M$ at $p$, which is denoted with $T_p M$, is the set of all tangent vectors at $p$. A choice of a basis for $T_pM$ is called reference frame.
\end{definition}
\begin{figure}[htbp]
\begin{center}
\includegraphics[width=0.5\textwidth]{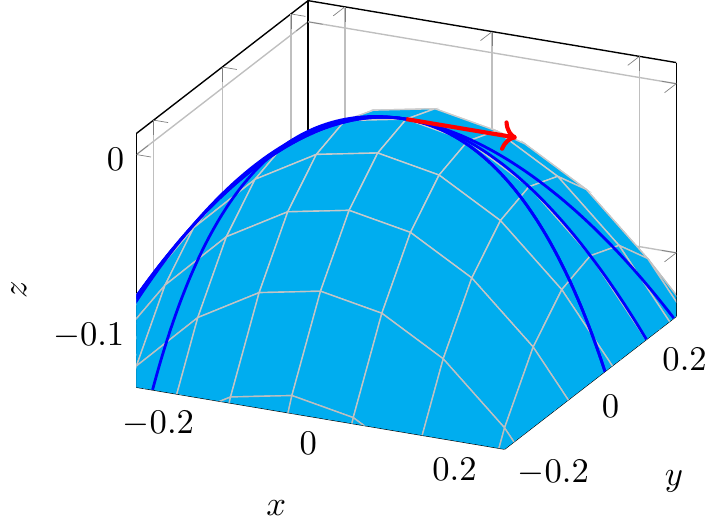}
\end{center}
\caption{Three different curves individuate the same tangent vector $v$ over a point $p$ of a paraboloid. To define the tangent vector $v$ we can use any of the three curves; In this sense the curves are equivalent at $p$.}
\label{fig:tangent_vector_curves}
\end{figure}

Note that the definition of tangent vector is not dependent on a particular choice of a chart. 
Taking the disjoint union of tangent spaces at all points, we obtain the tangent bundle $TM$, which is a manifold itself. 

\begin{example} Consider two classical cases: $M=\mathbb{R}^n$ and $M=S^1$:\begin{itemize}
\item $M=\mathbb{R}^n$: consider $M$ as an affine space. $M$ is a smooth manifold whose tangent space $T_p M$ over each point coincides with the vector space $\mathbb{R}^n$, namely the space of the displacement vectors. The tangent bundle is given by $TM = M \times \mathbb{R}^n$.
\item $M=S^1$: The tangent bundle is given by the disjoint union of all the tangent lines.
\end{itemize}
\end{example}
Since $TM$ is a manifold itself, we can introduce the notion of smooth vector field as follows.

\begin{definition}
A smooth vector field is a smooth curve $v:(a,b) \subseteq \mathbb{R}\rightarrow TM $ which associates $t \in (a,b)$ to a vector $v_p$ in $T_p M$ in a smooth way. 
\end{definition}
Using vector fields, one can build other geometric objects, like distributions and vertical bundles. These notions are not actually fundamental for the fully understanding of the idea beyond this work, but they play a role in \Cref{prop:vertical_integrable} and \Cref{prop:equivalence_characterization}, hence we included some technical details about these mathematical objects in \ref{appendix} for sake of completeness.


To conclude this section, we recall the notion of a differential of a map between two manifolds $M$ and $N$, which is a linear application transporting tangent vectors of $M$ to tangent vectors of $N$. 

\begin{definition} Let $M$ and $N$ be two smooth manifolds and let $\phi: M \rightarrow N$ be a smooth map. Consider a curve $\alpha:(-\varepsilon,\varepsilon) \rightarrow M$ with $\alpha(0) = p \in M$ and $\alpha^\prime(0)=w$. Then we can define the curve $\beta=\phi\circ\alpha$, with $\beta(0)=\phi(p)\in N$ and such that $\beta^\prime(0) \in T_{\phi(p)}N$. The map $d \phi_p : T_p M \rightarrow T_{\phi(p)}N$ defined by $d \phi_p(w) = \beta^\prime(0)$ is a linear map called differential of $\phi$ at $p \in M$.
\end{definition}

\subsection{Riemannian geometry}
\label{subsec:riemannian_geometry}
We begin reviewing some basic notions of Riemannian geometry, focusing on the case of manifolds realized as subsets of $\mathbb{R}^n$. The pivotal notion in Riemannian geometry is that of Riemannian metric, defined on a generic manifold as follows.
\begin{definition}\label{def:general_metric}
A Riemannian metric $g$ over a smooth manifold $M$ is a smooth family of inner products on the tangent spaces of $M$; Namely, $g$ associates to every $p \in M$ an inner product $g_p: T_p M \times T_p M \rightarrow \mathbb{R}$. 
\end{definition}
Given a metric $g$, we define the norm of a vector $v \in T_p M$ as $\| v \|_p = \sqrt{g_p(v,v)}$. When a manifold satisfies \Cref{as_1} (see \Cref{sec:geometric_framework}) we can identify $T_p \mathbb{R}^n$ with $\mathbb{R}^n$ itself \cite{Tu11}. In particular we can simplify \Cref{def:general_metric} to the following one.
\begin{definition}\label{def:simplified_metric}
A Riemannian metric $g$ over a smooth manifold $M$ as in \cref{as_1} is a map $g:M \rightarrow Bil(\mathbb{R}^n \times \mathbb{R}^n)$ that associates to each point $p$ a positive symmetric bilinear form $g_p:\mathbb{R}^n \times \mathbb{R}^n \rightarrow \mathbb{R}$ in a smooth way.
\end{definition}
Note that $g_p$ is positive definite: $g_p(x,y)=0$ if and only if $x=0$ or $y=0$.
\begin{remark}
Even if we can specialize definition \Cref{def:general_metric} to \Cref{def:simplified_metric}, it is important not to confuse $M$ and its tangent space. Since one should think of $M=\mathbb{R}^n$ as an affine space and of $T_p M \simeq \mathbb{R}^n$ as the space of displacement vectors, a metric $g$ associates to every point $x \in M$ a bilinear form over displacement vectors.
\end{remark}
Given a coordinate system $X$, providing the matrix associated to $g$ in such coordinate system fully specifies the metric. Let $Y$ be another coordinate system related to the original one by the diffeomorphism $\varphi:M \rightarrow M$, $(y_1,\cdots,y_n) \mapsto (x_1,\cdots,x_n)$. Then the matrix representing the metric $g$ in the new coordinates is given by
\begin{equation}\label{eq:metric_change_of_charts} 
g^Y_{ij} =  \sum_{h,k}\left( \frac{\partial x^h}{\partial y^i} \right) g^X_{hk} \left( \frac{\partial x^k}{\partial y^j} \right)
\end{equation}
where $\ds\frac{\partial x^h}{\partial y^i} = (J_\varphi)_{hi}$ is the $(h,j)$--entry of the Jacobian of the function $\varphi$.
\begin{example}\label{ex:canonical_metric_r2}
The canonical Euclidean metric of $\mathbb{R}^2$ in Cartesian coordinates $(x,y)$ is represented by  the matrix
\begin{equation}
g^C = 
\begin{pmatrix}
1 & 0 \\
0 & 1
\end{pmatrix}
\end{equation}
Let $M=(0,\infty)\times (0,\infty)$ be the first quadrant of the plane. In polar coordinates $(r,\theta) \in (\mathbb{R}_0^+ \times(0,\pi/2))$, the matrix $g^P$ associated to $g$ reads
\begin{equation}
g^P = 
\begin{pmatrix}
1 & 0 \\
0 & r^2
\end{pmatrix}
\end{equation}
Note that the matrix $g^P$ is a smooth function of the variable $r$.
\end{example}
\Cref{ex:canonical_metric_r2} also shows a general fact: If a manifold $M$ can be realized as a subset of the Euclidean space, it naturally inherits a canonical Riemannian metric form $\mathbb{R}^n$. Let $	\iota : M \rightarrow \mathbb{R}^n$ be the immersion map of $M$ in $\mathbb{R}^n$. Then we can equip $M$ with the metric
\begin{equation}\label{eq:induced_metric_rn} 
h = J_\iota^T g J_\iota
\end{equation}
where $J_\iota$ is the Jacobian matrix of $\iota$ in local coordinates, namely $(J_\iota)_{ij} = \left( \dfrac{\partial \iota^i}{\partial x^j} \right)$.

\begin{example}
The canonical metric of $S^2$ induced by that of $\mathbb{R}^3$ is represented, in spherical coordinates $(\phi,\theta) \in (0,2\pi) \times (0,\pi)$, by the matrix
\begin{equation}\label{eq:induced_metric_S2} 
h = 
\begin{pmatrix}
1 & 0 \\
0 & \sin^2(\theta)
\end{pmatrix}
\end{equation}
In fact, we can realize $S^2$ as a subset of $\mathbb{R}^3$ through the immersion 
\begin{eqnarray*}
&\iota :& (0,2\pi) \rightarrow \mathbb{R}^3\\
&\iota :& (\phi,\theta) \mapsto 
\begin{pmatrix}
\sin (\theta)\cos (\phi) \\
\sin (\theta)\sin (\phi) \\
\cos (\theta) 
\end{pmatrix}
\end{eqnarray*}
Applying \Cref{eq:induced_metric_rn}, with $g$ the canonical metric of $\mathbb{R}^3$, yields \Cref{eq:induced_metric_S2}.
\end{example}

\begin{remark}
Two subset of $\mathbb{R}^n$ endowed with different metrics have different geometries, in the sense that they are describing two different geometric shapes. For example consider the set $S = (0,1) \times (0,1)$ with the canonical Euclidean metric $g^C$ as in \Cref{ex:canonical_metric_r2}, then $S$ represents a flat surface, a square of the real plane. If we consider the same set $S$ with the metric $g^P$ defined in \Cref{eq:induced_metric_S2}, then $S$ represents part of the surface of a sphere.
\end{remark}

If a manifold $M$ is equipped with a Riemannian metric $g$, we have a canonical definition of length of a curve. 
\begin{definition}\label{def:lenght}
Let $\gamma:[a,b] \rightarrow M$ be a piecewise smooth curve, then its length is 
\begin{equation}
L(\gamma) = \int_a^b \| \dot{\gamma}(s) \|_{\gamma(s)} ds = \int_a^b \sqrt{g_{\gamma(s)}(\dot{\gamma}(s),\dot{\gamma}(s))} ds
\end{equation}
\end{definition}

\begin{remark}
Notice that, given a curve $\gamma:[a,b] \rightarrow M$, $L(\gamma)=0$ if and only if $\gamma$ is the constant map associating to every $s \in [a,b]$ the same point $p\in M$. This is a consequence of the non-degeneracy of the Riemannian metric.
\end{remark}
If a manifold $M$ is path--connected, as in \Cref{as_1}, we can also define the energy of a curve and a distance function $d(x,y)$.

\begin{definition}
Let $\gamma$ a curve on a path--connected manifold $M$. The energy of $\gamma$ is
\begin{equation}
E(\gamma) = \int_0^1  \| \dot{\gamma}(s) \|^2_{\gamma(s)} ds =  \int_a^b g_{\gamma(s)}(\dot{\gamma}(s),\dot{\gamma}(s)) ds
\end{equation}
\end{definition} 
 In Riemannian geometry $L(\gamma)= 0$ if and only if $E(\gamma)=0$ and, in general, a curve minimizes the length functional if and only if it minimizes the energy functional.

\begin{definition}\label{def:distance}
Consider  two distinct points $x$ and $y$ on a path-connected manifold $M$. The distance $d(x,y)$ between such points is a function $d:M \times M \rightarrow \mathbb{R}_0^+$ defined as
\begin{equation}
\begin{split}
d(x,y) = \inf \{ L(\gamma) \ | \ \gamma:[0,1]\rightarrow M, \gamma\in\pc \mbox{ s.t. } \gamma(0)=x \mbox{ and } \gamma(1)=y \}   
\end{split}
\end{equation}
The pair $(M,d)$ is a metric space.
\end{definition}


Consider two smooth manifolds $M,N$ and let $\Lambda:M \rightarrow N$ be a smooth map. Suppose that $N$ is equipped with a Riemannian metric $g^N$. Then we can endow $M$ with the pullback metric $g^M = \Lambda^* g^N$, where $\Lambda^*$ is the pullback through $\Lambda$. Chosen two global coordinate systems $(x_1,\cdots,x_m)$ and $(y_1,\cdots,y_n)$ of $M$ and $N$ respectively, the matrix associated to the pullback of $g^N$ through $\Lambda$ reads:
\begin{equation}\label{eq:metric_pullback}
\left(g^M\right)_{ij} = \sum_{h,k = 1}^{\dim(N)} \left( \frac{\partial \Lambda^h}{\partial x^i} \right) (g^N)_{hk} \left( \frac{\partial \Lambda^k}{\partial x^j} \right)
\end{equation}

\begin{remark}
Notice that \Cref{eq:metric_pullback} encompasses \Cref{eq:metric_change_of_charts} and \Cref{eq:induced_metric_rn} as particular cases.
\Cref{eq:metric_pullback} also yields a way to explicitly compute the Riemannian metric of manifolds realized as subsets of other Riemannian manifolds.
\end{remark}
When $\Lambda:M \rightarrow N$ is a diffeomorphism or $\Lambda$ is an immersion, encompassing the case in which $\dim(M)<\dim(N)$ and $J_\Lambda$ is injective, then the metric $g^M = \Lambda^* g^N$ obtained using the pullback of $g^N$ through $\Lambda$ is still non-degenerate; However, if $J_\Lambda$ is not injective, and this is certainly true if $\dim(M)>\dim(N)$, then $g^M = \Lambda^* g^N$ is degenerate - the matrix associated with $g^M$ is not of full rank - and therefore it is not a Riemannian metric.

Consider \eqref{def:sequence_of_maps_intro} with $n$=2:
\begin{equation*}
\begin{tikzcd}
M_0 \arrow[r, "\Lambda_1"] & M_{1} \arrow[r, "\Lambda_2"] & M_{2} 
\end{tikzcd}
\end{equation*}
and denote with $g^{(i)}$ the Riemann metric on manifold $M_i$: then, using \eqref{eq:metric_pullback}, one is able to compute the pullback of $g^{(2)}$ on $M_0$:
\begin{eqnarray*}
(g^{(0)})_{ij} &=& \sum_{h,k = 1}^{\dim(M_2)} \left( \frac{\partial \Lambda^h}{\partial x^i} \right) (g^{(1)})_{hk} \left( \frac{\partial \Lambda^k}{\partial x^j} \right) \\
&=& \sum_{h,k = 1}^{\dim(M_2)} \left( \frac{\partial \Lambda^h}{\partial x^i} \right) \left[\sum_{p,q = 1}^{\dim(M_1)} \left( \frac{\partial \Lambda^p}{\partial x^h} \right) (g^{(2)})_{pq} \left( \frac{\partial \Lambda^q}{\partial x^k} \right) \right]\left( \frac{\partial \Lambda^k}{\partial x^j} \right).
\end{eqnarray*}
This can be thus generalized on sequences such as \eqref{def:sequence_of_maps_intro}. 

\subsection{Singular Riemannian geometry}\label{subsec:singular_riemannian_geometry}
Finally, we introduce the notion of singular Riemann metric which will be useful to analyse smooth fully connected neural networks using the geometric framework introduced in \eqref{subsec:geometric_framework_neural}.
\begin{definition}\label{def:general_singular_metric}
A singular Riemannian metric $g$ over a smooth manifold $M$ is a smooth family of positive semidefinite symmetric bilinear forms on the tangent spaces of $M$. 
\end{definition}
Given a singular metric $g$, we define the seminorm of a vector $v \in T_p M$ as $\| v \|_p = \sqrt{g_p(v,v)}$. As we did for the definition of Riemannian manifold, we can specialize the definition of singular Riemannian metric to manifolds abiding to \Cref{as_1}.
\begin{definition}\label{def:simplified_singular_metric}
A singular Riemannian metric $g$ over a smooth manifold $M$ as in \Cref{as_1} is a map $g:M \rightarrow Bil(\mathbb{R}^n \times \mathbb{R}^n)$ that associates to each point $x$ a positive semidefinite symmetric bilinear form $g_x:\mathbb{R}^n \times \mathbb{R}^n \rightarrow \mathbb{R}$ in a smooth way.
\end{definition}
All the formulae of the previous section continue to hold true even in the case of a singular Riemannian metric, with the difference that we may have smooth non-constant curves - namely those curves whose image is a single point - of null length. The degeneracy of the metric induces a decomposition of the tangent space $T_p M$ over every point $p \in M$ as a direct sum $T_p P \oplus T_p N$, with $T_p P$ the subspace of vectors with positive seminorm and $T_p N$ the subspace spanned by the vectors whose seminorm is zero. If we define the pseudolenght $\ell$ and the pseudodistance $\delta$ as in \Cref{def:lenght} and \Cref{def:distance}, the previous observation about the existence of non-trivial curves of length zero in $M$ entails that there are points whose distance is null or, in other words, the pair $(M,\delta)$ is a pseudometric space. Identifying the metrically indistinguishable points using the equivalence relation $x \sim y \Leftrightarrow \delta(x,y)=0$ for $x,y \in M$, we obtain the metric space $(M / \sim,\delta)$. Note that a class of equivalence $[x]$ is the set $\{y \in M \ | \ \delta(x,y) = 0\}$ and therefore the points of $M/\sim$ are classes of equivalence of points in $M$.

\begin{example}\label{ex:singular_pullback_linear}
Let $A:\mathbb{R}^3\rightarrow \mathbb{R}^2$ be the linear function which in the canonical basis is represented by the matrix
\begin{equation}
A= \begin{pmatrix}
1 & 2 & 2 \\
3 & 1 & 5
\end{pmatrix}
\end{equation}
Suppose to endow $\mathbb{R}^2$ with its canonical metric $g$ given in example \eqref{ex:canonical_metric_r2}. Then we can equip $\mathbb{R}^3$ with a singular Riemannian metric $h$ using the pullback of $g$ through $A$. Working in Cartesian coordinates for both $\mathbb{R}^2$  and $\mathbb{R}^3$, the matrix associated to $h$ is:
\begin{equation}
\begin{pmatrix}
10 & 5 & 17 \\
5 & 5 & 9 \\
17 & 9 & 29
\end{pmatrix}
\end{equation}
which is of rank $2$. A straightforward computation yields that $\Ker(h)=\Span\{(8,1,-5) \}$. Applying \cref{def:distance} we find that $x \sim y$ if and only if $x-y \in \Ker(h)$. In particular the equivalence classes are straight lines parallel to $\Span\{(8,1,-5) \}$. Indeed, let $x-y \in \Ker(h)$ and consider the oriented segment from $x$ to $y$ parametrized by $\gamma(s)=x+s(y-x)$ with $s \in [0,1]$. Then $\dot{\gamma}(s)=y-x \in \Ker(h_{\gamma(s)})$ for every $s \in [0,1]$. Therefore we have that $l(\gamma)=\int_0^1 \sqrt{h_{\gamma(s)}(y-x,y-x)} ds = 0$ and we proved that the pseudodistance between $x$ and $y$ is zero, namely that they are in the same class of equivalence. On the other hand if $x \sim y$ then $\delta(x,y)=0$, which means that there is a $\mathcal{C}^1$ curve $\gamma:[0,1]\rightarrow \mathbb{R}^3$ with $\gamma(0)=x$ and $\gamma(1)=y$ such that $\dot{\gamma}(s) \in \Ker(h)$ for every $s \in [0,1]$. In particular we can consider $\dot{\gamma}(s)=v \in \Ker(h)$, which along with the conditions at $s=0$ and $s=1$ gives the curve $\gamma(s)=x+s(y-x)$, a segment lying on a line parallel to $\Span\{(8,1,-5) \}$.
\end{example}


\section{A geometric framework for neural network}
\label{sec:geometric_framework}

\subsection{General assumptions}
As we anticipated in the introduction, we propose a geometric definition of neural network as a sequence of smooth maps $\Lambda_i$ between manifolds $M_j$ of the form:
\begin{equation}\label{def:sequence_of_maps}
\begin{tikzcd}
M_{0} \arrow[r, "\Lambda_1"] & M_{1} \arrow[r, "\Lambda_2"] & M_{2} \arrow[r,"\Lambda_3"]  & \cdots \arrow[r,"\Lambda_{n-1}"] & M_{n-1} \arrow[r, "\Lambda_n"] & M_n
\end{tikzcd}
\end{equation}
We call $M_0$ the \textit{input manifold} and $M_n$ the \textit{output manifold}. All the other manifolds of the sequence are called \textit{representation manifolds}. In this section we state some general assumptions on this sequence and we introduce the main geometric objects we need in the following. The first hypothesis is the following.
\begin{assumption}\label{as_1}
The manifolds $M_i$ are open and path-connected sets of dimension $\dim M_i=d_i$.
\end{assumption}
The next two definitions are propaedeutic to the second assumption we make.
\begin{definition}\label{def:submersion}
Let $f:M \rightarrow N$ be a smooth map between manifolds. Then $f$ is a submersion if, in any chart, the Jacobian $J_f$ has rank $\dim(N)$.
\end{definition}
\begin{remark}
In literature, see e.g \cite{DFO20}, there is also another stricter definition requiring that the differential $d_f$ must be surjective at every point. In our case, under \Cref{as_2} below, the two definitions coincide.
\end{remark}
\begin{definition}\label{def:embedding}
Let $f:M \rightarrow N$ be a smooth map between manifolds. $f$ is an embedding if its differential is everywhere injective and if it is an homeomorphism with its image. In other words, $f$ is a diffeomorphism with its image.
\end{definition}
\begin{remark}
An embedding allow to realize a manifold as a subset of another space. 
\end{remark}
\begin{assumption}\label{as_2}
The sequence of maps \eqref{def:sequence_of_maps} satisfies the following properties:
\begin{itemize}
\item[1)] If $\dim(M_{i-1}) \leq \dim(M_i)$ the map $\Lambda_i: M_{i-1} \rightarrow M_i$ is a smooth embedding.
\item[2)] If $\dim(M_{i-1}) > \dim(M_i)$ the map $\Lambda_i: M_{i-1} \rightarrow M_i$ is a smooth submersion.
\end{itemize}
\end{assumption}
Our last general hypothesis is the following:
\begin{assumption}\label{Assumption_2}
The manifold $M_n$ is equipped with the structure of Riemannian manifold, with metric $g^{(n)}$.
\end{assumption}
The pullbacks of $g^{(n)}$ through the maps $\Lambda_n$, $\Lambda_n \circ \Lambda_{n-1}$, ..., $\Lambda_n \circ \Lambda_{n-1} \circ \cdots \circ \Lambda_{1}$ yield a sequence of (generally degenerate) Riemannian metrics $g^{(n-1)}, g^{(n-2)},\cdots, g^{(0)}$ on $M_{n-1},M_{n-2},\cdots,M_0$. Hereafter, we shall denote with $\mathcal{N}_i$ the map $\Lambda_n \circ \Lambda_{n-1} \circ \cdots \circ \Lambda_{i}$. In the case $i=1$, we shall simply write $\mathcal{N}$ instead of $\mathcal{N}_1$.

The singular Riemannian metrics $g^{(i)}$ allow us to decompose the tangent space over each point of $M_i$ as the direct sum of two vector spaces. Let $x \in M_{i}$, then we can write $T_x M_{i} = T_x P_{i} \oplus T_x N_{i}$, with $T_x P_{i}$ the subspace of all the vectors $v \in T_x M_{i}$ such that $g^{(i)}_x(v,v)>0$ and $T_x N_{i}$ the subspace of all the vectors $v \in T_x M_{i}$ such that $g^{(i)}_x(v,v)=0$. We shall call the vectors $v \in T_x M_{i}$ \textit{null vectors or vertical vectors} if $g^{(i)}_x (v,v) = 0$, while an element $v \in T_x P_{i}$ will be called \textit{spacelike} or \textit{horizontal vector}. The terms null and spacelike vectors are borrowed from general relativity.

\subsection{A geometric framework for neural networks}\label{subsec:geometric_framework_neural}
Now we present the geometric framework we will use to study fully connected neural networks. We begin by giving some definitions and stating some assumptions influenced by those given in \cite{HaRa17} and \cite{Shen18}.

\begin{assumption}\label{Assumption_N1}
We assume that the manifolds $M_i$ are diffeomorphic to $\mathbb{R}^{d_i}$.
\end{assumption}

In this framework, we are not using the inner product of induced by $\mathbb{R}^{d_i}$. Note that in practical applications the manifolds $M_i$ are subsets of $\mathbb{R}^{d_i}$ of the form $(a_1,b_1)\times \cdots (a_{d_i},b_{d_i})$ with $-\infty \leq a_i < b_i \leq + \infty$.

Motivated by \cite{Shen18}, we define the following notion of smooth feed--forward layer, a particular kind of maps between manifolds.
\begin{definition}[Smooth layer]\label{def_layer}
Let $M_{i-1}$ and $M_i$ be two smooth manifolds satisfying \cref{Assumption_N1}. A map $\Lambda_i : M_{i-1} \rightarrow M_i$ is called a smooth layer if it is the restriction to $M_{i-1}$ of a function $\overline{\Lambda}^{(i)}(x)  : \mathbb{R}^{d_{i-1}} \rightarrow \mathbb{R}^{d_i}$ of the form
\begin{equation}
\overline{\Lambda}^{(i)}_\alpha (x)= F^{(i)}_\alpha\lp\sum_\beta A_{\alpha\beta}^{(i)} x_\beta+b^{(i)}_\alpha\rp 
\end{equation}
for $i=1,\cdots,n$, $\alpha=1,\cdots, d_{i-1}$, $x \in \mathbb{R}^{d_{i}}$, $b^{(i)} \in \mathbb{R}^{d_i}$ and $A^{(i)} \in \mathbb{R}^{d_{i} \times d_{i-1}}$, with $F^{(i)} : \mathbb{R}^{d_i} \rightarrow \mathbb{R}^{d_i} $ a diffeomorphism.
\end{definition}

The matrices $A^{(i)}$ and the vectors $b^{(i)}$ associated with the $i$--th layer are called matrices of weights and biases of the $i$--th layer, respectively. The functions $F^{(i)}$ are called activation functions \cite{GBC16,Agg18}. We remark that the layers maps $\Lambda_i$ are open maps and that they satisfy \Cref{as_2}. 
\begin{example}[Feedforward sigmoid layer]\label{ex:fsl}
A notable example of layer, usually referred as feedforward layer with sigmoid activation function in the neural network literature, is the following \cite{Agg18,DFO20}. Consider the sets $M_1 = (a,b)^{d_1}$, with $a,b \in \mathbb{R}$, and $M_2 = (0,1)^{d_2}$. Consider the sigmoid function 
\begin{eqnarray*}
&\sigma:& \mathbb{R} \rightarrow (0,1)\\
&\sigma:& x \mapsto \dfrac{e^x}{1+e^x}
\end{eqnarray*}
Let $A^{(1)} \in \mathbb{R}^{d_1 \times d_2}$. The feed--forward layer with sigmoid activation function is the map $\Lambda: M_1 \rightarrow M_2$ as that function that, given a point $p=(p_{1},\cdots,p_n) \in M_1$, maps $p$ to 
$$
\Lambda(p) = \lp \sigma\Big(\sum_j A_{1j}^{(1)} p_j\Big),\sigma\Big(\sum_j A_{2j}^{(1)} p_j\Big),\cdots,\sigma\Big(\sum_j A_{d_2j}^{(1)} p_j\Big) \rp
$$
\end{example}
We also assume the following hypothesis:
\begin{assumption}[Full rank hypothesis]\label{Assumption_FR}
We assume that all the matrices of weights $A^{(i)}$, $i=1,\cdots,n$, are of full rank.
\end{assumption}
\begin{remark}
Consider the set of all the $m\times n$ matrices. If the entries of these matrices are generated using a probability distribution $F$ whose associated measure $\mu_F$ is absolutely continuous with respect to the Lebesgue measure - for example the uniform and the Gaussian distributions - then the matrices which are not of full rank are a subset of $\mathbb{R}^{m,n}$, the set of $m\times n$ real matrices, of null measure. In fact, let $A \in \mathbb{R}^{m,n}$ and set $k= \min \{m,n\}$. Let $\{S_i\}$ be the set of all the $k \times k$ submatrices of $A$. Define the function $f(A)=\sum_i \det(S_i)^2$, which is a polynomial function of the entries of $A$. Since in general $f$ is not the identically zero polynomial, then the set of the zeroes of $f$ has zero Lebesgue measure and $A$ is full-rank except on a set of Lebesgue measure $0$.
\end{remark}

\Cref{as_1,as_2} are satisfied if we assume \Cref{Assumption_N1} and that the maps $\Lambda_i$ are as in \Cref{def_layer}, once the full rank hypothesis \Cref{Assumption_FR} is satisfied. Indeed, \Cref{Assumption_N1} immediately entails \Cref{as_1}, while the full rank hypothesis and the properties of the maps $\Lambda_i$ in \Cref{def_layer} yield that the functions $\Lambda_i$ are smooth submersions if $\dim(M_{i-1}) > \dim(M_i)$. The second condition in \Cref{as_2} is satisfied in the case $\dim(M_{i-1}) \leq \dim(M_i)$ for $\Lambda_i$ is smooth and because it is a diffeomorphism with its image, being the matrix of weights of $\Lambda_i$ of full rank, with $F_\alpha$ a diffeomorphism.
\begin{remark}
The hypotheses of \cref{Assumption_N1} are satisfied by neural network maps whose layers are defined using some commonly employed activation functions \cite{Agg18,GBC16,DFO20}, for example:
\begin{enumerate}[label=\arabic*)]
\item In a sigmoid layer $F :\mathbb{R}^{l} \to \mathbb{R}^{l}$, 
$F^j(x)=\dfrac{1}{1+e^{-x_j}} {\textit{ for }}j=1,\cdots,l$.
\item[]
\item In a softplus layer $F :\mathbb{R}^{l} \to \mathbb{R}^{l}$
$F^j(x)=\ln(1+e^{-x_j}) {\textit{ for }}j=1,\cdots,l$.
\item[]
\item In a softmax layer, $F :\mathbb{R}^{l} \to \mathbb{R}^{l}$
$F^j(x)=\dfrac {e^{x_j}}{\ds\sum_{k=1}^{l}e^{x_k}} {\textit{ for }}j=1,\cdots,l$.

\end{enumerate}
An example of commonly employed activation function not encompassed by our definition of layer is ReLu. In a ReLu layer $f :\mathbb{R}^{l} \to \mathbb{R}^{l}$,  $F(x)_j= \max\{0,x_j\}$ is not even $\mathcal{C}^1$. However we note that, softplus layers and SiLu layers are used as smooth approximations of ReLu layers.
\end{remark}

The following definition extends that of Neural Networks in our framework, and the subsequent examples show  how classical network structures, arising in practical applications, satisfy indeed our definition.
\begin{definition}[Smooth Neural network]\label{def:smooth_neural_network}
A smooth neural network is a sequence of maps between manifolds 
\begin{equation}
\begin{tikzcd}
M_{0} \arrow[r, "\Lambda_1"] & M_{1} \arrow[r, "\Lambda_2"] & M_{2} \arrow[r,"\Lambda_3"]  & \cdots \arrow[r,"\Lambda_{n-1}"] & M_{n-1} \arrow[r, "\Lambda_n"] & M_n
\end{tikzcd}
\end{equation}
with $n \geq 2$ such that the manifolds $M_i$ abide to hypothesis \eqref{Assumption_N1} and the maps $\Lambda_i$ are as in definition \eqref{def_layer}.
\end{definition}

\begin{example}
Classical structures satisfy the above definition:
\begin{itemize}
\item {\normalfont Shallow Network:} The sequence of maps 
$$
\begin{tikzcd}
M_{0} \arrow[r, "\Lambda_1"] & M_{1} \arrow[r, "\Lambda_2"] & M_{2}
\end{tikzcd}
$$
with $\Lambda_1$ and $\Lambda_2$ as in \cref{def_layer} is a shallow neural network.
\item {\normalfont Deep Network:} A sequence of maps \eqref{def:sequence_of_maps} with more than two layers, namely with $n \geq 3$, whose maps $\Lambda_i$ abiding to \cref{def_layer} is a deep neural network.
\end{itemize}
\end{example}

Up to now we never mentioned the notion of node. In our framework $d_i = \dim(M_i)$ is the intrinsic dimension of a representation space, not the dimension of the space in which we realize $M_i$. For example, we can describe the position on the unit sphere $S^2$ using three coordinates, but this does not mean that the sphere is a manifold of dimension $3$. In fact, the unit sphere $S^2$ is actually a two dimensional manifold which can be embedded in $\mathbb{R}^3$. In this embedding the three coordinates $x,y,z$ are related by $x^2+y^2+z^2=1$, so only two coordinates are independent. In our framework a node is a coordinate of the higher dimensional embedding space $\mathbb{R}^k$, with $k \geq d_i$, in which we embed the representation manifold $M_i$. When $k=d_i$, then the number of nodes correspond to the dimension of the manifold, but in general $k\geq d_i$ and the number of nodes is greater than the dimension of $M_i$. Indeed, due to the Whitney embedding theorem we know that given any $m$-dimensional manifold $M$, $m \in \mathbb{N}$, then we can realize $M$ in the Euclidean space $\mathbb{R}^{2m}$. The dimension of the embedding space may be lower -- as in the case of the sphere $S^2$ which can be realized in $\mathbb{R}^3$ -- but it is an analysis to be made case by case. We illustrate the difference between the dimension of a manifold of the sequence \eqref{def:sequence_of_maps_intro} and the number of nodes, namely the dimension of the embedding space, with the following example.
\begin{example}
Consider a sigmoid layer $\Lambda_1: M_0 \rightarrow M_1$ without biases, $M_0=\mathbb{R}^2$ and $M_1=\Lambda_1(M_0)$. Let $\sigma : \mathbb{R} \rightarrow (0,1)$ be the sigmoid activation function and consider the matrix of weights
\begin{equation*}
A_1 =
\begin{pmatrix}
1 & 2\\
3 & 4\\
5 & 6
\end{pmatrix}
\end{equation*}
If $(x_0,x_1)$ are global coordinates on $\mathbb{R}^2$, then
$$\Lambda_1(x_0,x_1)=\left(\sigma(x_0+2x_1),\sigma(3x_0+4x_1),\sigma(5x_0+6x_1) \right)$$
The fact that the output of $\Lambda_1$ is a three-dimensional vector does not mean that $M_1=\Lambda_1(M_0)$ is a manifold of dimension $3$. Remember that, as \Cref{def_layer} and \Cref{ex:fsl}, a sigmoid layer is a composition of two map, with the linear application represented by the matrix $A_1$ acting directly on $M_0$ and, whose output is then fed to the activation functions componentwise, through the map $F_1$ as per \cref{def_layer}. The range of the linear application $A_1$ is a two-dimensional vector space and by hypothesis $F_1$ is a diffeomorphism. This means that $F_1(A_1(M_0))=\Lambda_1(M_0)$ is a two-dimensional space and in particular that we can use $(x_0,x_1)$ as a global chart for $M_1$. Therefore, the output of the layer $\Lambda_1$ is a 2-dimensional manifold embedded in a 3-dimensional one. The $3$ components of the output of $\Lambda_1$ are the nodes of the layer.
\end{example}

\section{General results}
\label{sec:main}
In order to find out which properties of a neural network can be described by singular Riemannian geometry, it is convenient to analyse the sequence of maps \eqref{def:sequence_of_maps} under the more general \cref{as_1,as_2,Assumption_2}, without requiring the manifolds to be diffeomorphic to $\mathbb{R}^{d_i}$. The notion of pullback introduced in \cref{sec:basic_definitions}, allow us to build sequence of degenerate Riemannian metrics $g^{(i)}$ using the pullback of $g^{(n)}$ through the maps $\Lambda_n,\cdots,\Lambda_n\circ\cdots\Lambda_1$. On any manifold $M_i$, we can extend \cref{def:lenght} and, with abuse of notation, define the pseudolength of a curve $\gamma:(0,1) \rightarrow M_i$ as 
$$
\pl_i(\gamma)~:=~\ds\int_0^1 \sqrt{g^{(i)}_{\gamma(s)}(\dot{\gamma}(s),\dot{\gamma}(s))} ds
$$
and then, considering that $M_i$ is path-connected by hypothesis, we can define the pseudodistance function $\pd_i:M_{i} \times M_{i} \rightarrow \mathbb{R}_0^+$ as:
\begin{equation*}\label{def_pseudometric}
\pd_i(x,y) = \inf \{ \pl_i(\gamma) \ | \ \gamma:[0,1]\rightarrow M_i , \gamma\in\pc \mbox{ s.t. } \gamma(0)=x \mbox{ and } \gamma(1)=y\}   
\end{equation*}
for every $x,y \in M_i$. 
The pair $(M_i,\pd_i)$ is a pseudometric space, which we can be turned into a full-fledged metric space $M_i / \sim_i$ by means of the metric identification $x \sim_i y \Leftrightarrow \delta_i(x,y)=0$. We shall call $\pi_i: M_i \rightarrow M_i / \sim_i$ the map applying the equivalence relation. 
We begin to analyse the sequence of maps \eqref{def:sequence_of_maps} studying the relations between the pseudolenght of a curve in $M_i$ and that of its image through the map $\Lambda_{k} \circ \cdots \circ \Lambda_i$.
\begin{proposition}\label{prop:leghts_relation_every_manifold}
Let $\gamma:[0,1] \rightarrow M_i$ be a piecewise $\mathcal{C}^1$ curve. Let $k \in \{i,i+1,\cdots, n \}$ and consider the curve $\gamma_k = \Lambda_{k} \circ \cdots \circ \Lambda_i \circ \gamma$ on $M_k$. Then $\ell_i(\gamma)=\ell_k (\gamma_k)$
\end{proposition}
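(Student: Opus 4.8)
The plan is to reduce the statement to a single-map claim and then iterate. First I would observe that $\Lambda_k \circ \cdots \circ \Lambda_i = \Lambda_k \circ (\Lambda_{k-1}\circ \cdots \circ \Lambda_i)$, so if I can prove the result for a single layer map — namely, that for any piecewise $\mathcal{C}^1$ curve $\gamma$ on $M_j$ one has $\ell_j(\gamma) = \ell_{j+1}(\Lambda_{j+1}\circ\gamma)$ — then the general statement follows by a straightforward induction on $k$, composing the curves one map at a time. So the whole proof rests on the one-step identity.

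For the one-step identity, the key point is the very definition of the metrics: by construction $g^{(j)}$ is the pullback of $g^{(j+1)}$ through $\Lambda_{j+1}$ (more precisely, $g^{(j)}$ is the pullback of $g^{(n)}$ through $\mathcal{N}_j = \Lambda_n\circ\cdots\circ\Lambda_j$, and since pullbacks compose functorially, $g^{(j)} = \Lambda_{j+1}^{\,*} g^{(j+1)}$). I would make this explicit using the coordinate formula \eqref{eq:metric_pullback}. Set $\beta = \Lambda_{j+1}\circ\gamma$, so that $\beta(s) = \Lambda_{j+1}(\gamma(s))$ and, by the chain rule, $\dot{\beta}(s) = d(\Lambda_{j+1})_{\gamma(s)}(\dot{\gamma}(s))$. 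Then for the integrand of $\ell_j(\gamma)$ at a parameter value $s$ where $\gamma$ is differentiable, writing $v = \dot{\gamma}(s)$ and $p = \gamma(s)$, one computes
\begin{equation*}
g^{(j)}_{p}(v,v) = \big(\Lambda_{j+1}^{\,*} g^{(j+1)}\big)_p(v,v) = g^{(j+1)}_{\Lambda_{j+1}(p)}\big(d(\Lambda_{j+1})_p v,\, d(\Lambda_{j+1})_p v\big) = g^{(j+1)}_{\beta(s)}(\dot{\beta}(s),\dot{\beta}(s)),
\end{equation*}
where the middle equality is exactly the pullback identity, which in coordinates is \eqref{eq:metric_pullback}. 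Taking square roots and integrating over $[0,1]$ gives $\ell_j(\gamma) = \ell_{j+1}(\beta)$. Since $\gamma$ is piecewise $\mathcal{C}^1$ and $\Lambda_{j+1}$ is smooth, $\beta$ is again piecewise $\mathcal{C}^1$, so the pseudolength $\ell_{j+1}(\beta)$ is well defined and the integral decomposes over the finitely many smooth pieces without difficulty.

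The main obstacle is essentially bookkeeping rather than mathematical depth: one must be careful that the pullback identity $g^{(j)} = \Lambda_{j+1}^{\,*} g^{(j+1)}$ is the correct reading of the definition given before Proposition~\ref{prop:leghts_relation_every_manifold} (where $g^{(j)}$ is defined as the pullback of $g^{(n)}$ all the way down), which requires invoking functoriality of pullbacks, $(\Lambda_n\circ\cdots\circ\Lambda_j)^* = \Lambda_{j+1}^* \circ (\Lambda_n\circ\cdots\circ\Lambda_{j+1})^*$ — a standard fact, already implicitly used in the nested computation of $g^{(0)}$ displayed in Section~\ref{subsec:riemannian_geometry}. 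A secondary point to handle cleanly is that the metrics $g^{(j)}$ may be degenerate, so $g^{(j)}_p(v,v)$ can vanish for $v\neq 0$; but this causes no trouble, since the computation above is an identity of nonnegative numbers and the square root is still continuous at $0$, so the integrals are well defined exactly as in the non-degenerate case. No compactness, completeness, or injectivity of the $\Lambda_i$ is needed — only smoothness and the defining relation between the metrics — so \Cref{as_2} is not even invoked here.
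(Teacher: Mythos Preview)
Your proposal is correct and rests on exactly the same mechanism as the paper's proof: the pullback identity $g^{(i)} = (\Lambda_k\circ\cdots\circ\Lambda_i)^* g^{(k)}$ makes the integrands of $\ell_i(\gamma)$ and $\ell_k(\gamma_k)$ pointwise equal. The only cosmetic difference is that the paper applies this identity once with the full composed map, whereas you factor it into single-layer steps and induct; the content is the same.
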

\begin{proof}
It is enough to notice that $\gamma_k : (0,1) \rightarrow M_k$ is still a piecewise $\mathcal{C}^1$ curve and that
\begin{equation*}
\begin{split}
\ell_k(\gamma_k) &= \int_0^1\sqrt{g^{(k)}_{\gamma_k(s)}(\dot{\gamma}_k(s),\dot{\gamma}_k(s))} ds \\
&= \int_0^1 \sqrt{((\Lambda_{k} \circ \cdots \circ \Lambda_i)^*g^{(k)})_{\gamma(s)}(\dot{\gamma}(s),\dot{\gamma}(s))} ds\\ &=\ell_i(\gamma)
\end{split}
\end{equation*}
where $(\Lambda_{k} \circ \cdots \circ \Lambda_i)^*g^{(k)}$ is the pullback of $g^{(k)}$ via $\Lambda_{k} \circ \cdots \circ \Lambda_i$.
\end{proof}
Setting $k=n$ in the \cref{prop:leghts_relation_every_manifold} we are considering the map $\mathcal{N}$ and the following result follows immediately.
\begin{corollary}\label{prop:leghts_relation}
Let $\gamma:[0,1] \rightarrow M_i$ be a piecewise $\mathcal{C}^1$ curve. Consider the curve $\Gamma = \mathcal{N}_i \circ \gamma$ on $\mathcal{N}(M_0) \subseteq M_n$. Then $\ell_i(\gamma)=L_n (\Gamma)$, with $L_n$ the length of a curve defined using the Riemannian metric $g^{(n)}$.
\end{corollary}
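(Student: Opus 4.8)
The plan is to read the statement off Proposition~\ref{prop:leghts_relation_every_manifold} as the special case $k=n$, so essentially no new argument is needed — only an unwinding of notation plus one trivial observation about the output manifold. First I would recall that, by definition, $\mathcal{N}_i = \Lambda_n \circ \Lambda_{n-1} \circ \cdots \circ \Lambda_i$. Hence the curve called $\gamma_k$ in \Cref{prop:leghts_relation_every_manifold}, specialised to $k = n$, is
\[
\gamma_n = \Lambda_n \circ \cdots \circ \Lambda_i \circ \gamma = \mathcal{N}_i \circ \gamma = \Gamma .
\]
Since $\gamma$ is piecewise $\mathcal{C}^1$ and each $\Lambda_j$ is smooth, $\Gamma$ is again a piecewise $\mathcal{C}^1$ curve, with image contained in $M_n$ (indeed in $\mathcal{N}_i(M_i)\subseteq M_n$).

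Next I would apply \Cref{prop:leghts_relation_every_manifold} verbatim with $k=n$: it yields $\ell_i(\gamma) = \ell_n(\gamma_n) = \ell_n(\Gamma)$, where $\ell_n$ is the pseudolength on $M_n$ computed from the (pullback-free) metric $g^{(n)}$.

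Finally I would note that, by \Cref{Assumption_2}, $g^{(n)}$ is a genuine, non-degenerate Riemannian metric on $M_n$; consequently the pseudolength functional $\ell_n$ built from $g^{(n)}$ is exactly the Riemannian length functional $L_n$ of \Cref{def:lenght}. Chaining the two equalities gives $\ell_i(\gamma) = \ell_n(\Gamma) = L_n(\Gamma)$, which is the assertion.

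I do not anticipate any real obstacle: the corollary is a pure specialisation of the preceding proposition, and the only point that deserves an explicit sentence is the identification $\ell_n = L_n$, which holds precisely because $g^{(n)}$ is non-degenerate on the output manifold. The only bookkeeping subtlety is checking that the symbol $\Gamma = \mathcal{N}_i\circ\gamma$ agrees with the ``$\gamma_n$'' of \Cref{prop:leghts_relation_every_manifold}, and it does so literally once one expands $\mathcal{N}_i$.
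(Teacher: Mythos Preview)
Your proposal is correct and mirrors the paper's own treatment: the corollary is obtained simply by setting $k=n$ in \Cref{prop:leghts_relation_every_manifold}, and your explicit identification $\ell_n = L_n$ (via \Cref{Assumption_2}) just spells out what the paper leaves implicit. There is nothing to add.
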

These two results shall come in handy layer, to study the quotient spaces $M_i / \sim_i$. Note that if $\Lambda_i$ is a map as in the second case of \Cref{as_2}, then $\Lambda_i$ is a diffeomorphism between $M_i$ and its image $\Lambda_i(M_i)$. Being $\Lambda_i$ a diffeomorphism onto its image, the pullback of $g^{(i+1)}$ with respect to $\Lambda_i$ is a (singular) metric of constant rank equal to that of $g^{(i+1)}$. In particular, given any point $p \in M_i$ and a vector $v \in T_p M_i$, we have that $g^{(i)}_p(v,v) = g^{(i+1)}_{\Lambda_i(p)}(d\Lambda_i v,d\Lambda_i v)$. This observation entails that given two points $x,y \in M_i$, they are equivalent if and only if $\Lambda_i(x) = \Lambda_i(y)$. In other words, the map $\Lambda_i$ is merely transporting the equivalence relation on $M_{i+1}$ over $M_i$ without introducing a new quotient of its own. 
For this reason, in this section we focus mainly on maps satisfying the first condition in \Cref{as_2}, corresponding to layers mapping a higher-dimensional manifold to a lower-dimensional one. In addition to transport the equivalence relation already present on $M_{i+1}$, these kind of maps introduce a new quotient of their own. All the results of this section should be read modulo diffeomorphisms with their images given by maps satisfying the second condition of \Cref{as_2}.

General properties of pseudometric spaces allow us to to characterize the spaces $M_i / \sim_i = \pi_i(M_i)$ as follows.
\begin{proposition}
$M_i / \sim_i$ is an open, path-connected, $T_2$, second-countable set.
\end{proposition}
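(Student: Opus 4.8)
The plan is to transfer each topological property from $M_i$ to the quotient $M_i/\sim_i$ through the quotient map $\pi_i$, using the fact that $\pi_i$ is continuous, surjective, and open. The openness of $\pi_i$ is the crucial technical ingredient: once I have it, path-connectedness and second-countability follow by standard general-topology arguments, while the $T_2$ property needs a separate argument because Hausdorffness is not automatically inherited by quotients (not even by open-quotient images), and will be the main obstacle.

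First I would establish that $\pi_i$ is an open map. The natural route is to show that for any open $U\subseteq M_i$, the saturation $\pi_i^{-1}(\pi_i(U))=\{y\in M_i:\exists x\in U,\ \delta_i(x,y)=0\}$ is open in $M_i$; then $\pi_i(U)$ is open in the quotient topology by definition. To see the saturation is open, I would use the local structure coming from the degeneracy of $g^{(i)}$: near any point the null directions $T_xN_i$ integrate (this is exactly the content invoked in the later \Cref{prop:vertical_integrable}) to give, locally, the fibers of a submersion, and moving along a null curve does not change the $\sim_i$-class by \Cref{prop:leghts_relation_every_manifold} together with the remark that a curve tangent to the kernel everywhere has zero pseudolength. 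Concretely, picking $y\in\pi_i^{-1}(\pi_i(U))$ with $\delta_i(x,y)=0$ for some $x\in U$, I would take a small neighborhood of $y$ foliated by null leaves; each leaf through a point of that neighborhood reaches a point whose class agrees with a class hit by $U$, giving an open neighborhood of $y$ inside the saturation. Alternatively, one can phrase everything through $\mathcal N_i$: by \Cref{prop:leghts_relation}, $\delta_i(x,y)=0$ iff $\mathcal N_i(x)$ and $\mathcal N_i(y)$ are at zero $g^{(n)}$-distance, i.e. (on the connected image) $\mathcal N_i(x)=\mathcal N_i(y)$, so $\sim_i$ is the fiber relation of $\mathcal N_i:M_i\to\mathcal N_i(M_i)$, which is an open map because it is a composition of submersions and diffeomorphisms-onto-image (recall the layer maps are open). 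Openness of $\pi_i$ then follows since $\pi_i$ factors as $\mathcal N_i$ followed by a homeomorphism $\mathcal N_i(M_i)\cong M_i/\sim_i$.

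With $\pi_i$ continuous, surjective and open in hand, the remaining properties are routine. Path-connectedness: $M_i$ is path-connected by \Cref{as_1}, and the continuous image of a path-connected space is path-connected, so $M_i/\sim_i=\pi_i(M_i)$ is path-connected; the same argument gives that it is an open set in whatever ambient description we use since $\pi_i$ is open and $M_i$ is open. Second-countability: if $\{U_k\}$ is a countable basis for $M_i$, then $\{\pi_i(U_k)\}$ is a countable family of open sets in the quotient, and because $\pi_i$ is open and surjective it is a basis — any open $V$ in the quotient has $\pi_i^{-1}(V)$ open, hence a union of $U_k$'s, whose images union to $V$.

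The Hausdorff property is where real work is needed. I would use the criterion that for an open quotient map $\pi_i$, the quotient is $T_2$ if and only if the relation $R=\{(x,y):x\sim_i y\}$ is closed in $M_i\times M_i$. So the task reduces to showing $R$ is closed. Using the identification $x\sim_i y\iff \mathcal N_i(x)=\mathcal N_i(y)$ from the previous paragraph, $R=(\mathcal N_i\times\mathcal N_i)^{-1}(\Delta)$ where $\Delta$ is the diagonal of $\mathcal N_i(M_0)\times\mathcal N_i(M_0)\subseteq M_n\times M_n$; since $M_n$ is a (Riemannian, hence metrizable, hence Hausdorff) manifold its diagonal is closed, so $R$ is closed by continuity of $\mathcal N_i\times\mathcal N_i$. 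This gives $T_2$. I expect the delicate point to be making the identification $\delta_i(x,y)=0\iff\mathcal N_i(x)=\mathcal N_i(y)$ fully rigorous — the direction $\mathcal N_i(x)=\mathcal N_i(y)\Rightarrow\delta_i(x,y)=0$ requires producing an actual zero-pseudolength curve between $x$ and $y$, which again leans on the integrability of the null distribution and path-connectedness of the fibers of $\mathcal N_i$, so I would either assume the fibers are connected (true for the linear-then-diffeomorphism layers of interest, whose fibers are affine subspaces intersected with the domain) or restrict the claim accordingly, flagging this as the one place the argument uses more than \Cref{as_1,as_2,Assumption_2}.
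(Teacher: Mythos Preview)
Your overall structure is sound and the arguments for path-connectedness and second-countability are correct, but your route is substantially more involved than the paper's, and your $T_2$ argument carries a gap that the paper's approach sidesteps entirely.

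The paper treats this proposition as a purely general-topology statement about the Kolmogorov quotient of a pseudometric space, with no appeal to $\mathcal{N}_i$, the null distribution, or any neural-network structure. Openness of $\pi_i$ is quoted as a standard fact about the metric-identification map of a pseudometric space; path-connectedness and second-countability then follow exactly as you argue. For $T_2$ the paper observes that pseudometric spaces are completely regular, so the metric identification $M_i/\sim_i$ is Tychonoff and in particular Hausdorff. In other words: the quotient $(M_i/\sim_i,\delta_i)$ is simply a metric space, and metric spaces are $T_2$. No closed-graph criterion, no identification of $\sim_i$ with the fiber relation of $\mathcal{N}_i$, no connectedness of fibers is needed.

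Your $T_2$ argument, by contrast, hinges on the claim $x\sim_i y \iff \mathcal{N}_i(x)=\mathcal{N}_i(y)$, which is \emph{not} established at this point of the paper and in fact is stronger than what \Cref{prop:same_quotient} later proves: that result identifies $\sim_i$ with $\sim_{\mathcal{N}_i}$, which requires a curve in the fiber, not merely equal images. You correctly flag the connected-fiber issue as delicate, but you should recognize that it is unnecessary: the pseudometric route removes the obstacle rather than assuming it away. Likewise, your openness argument via integrability of the null distribution essentially imports \Cref{prop:vertical_integrable}, which in the paper's logical order comes after the present proposition; the pseudometric argument avoids this circularity. What your approach does buy is a concrete geometric picture that anticipates the later results, but for the proposition as stated the paper's abstract argument is both shorter and logically cleaner.
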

\begin{proof}
An elementary property of quotient maps yields that $M_i / \sim_i$ is still a path-connected space and by \cite[Corollary 3.17]{Pir19}  we also know that $\pi_i$ is an open map, therefore the quotient set $M_i / \sim_i$ is open. Since pseudometric spaces are completely regular \cite[Section 7]{Pir19}, we conclude that $M_i / \sim_i$ is Tychonoff and therefore it is in particular $T_2$. At last we note that, since $\pi_i$ is an open quotient, $M_i / \sim_i$ is also second-countable.
\end{proof}
\begin{proposition}\label{prop_neural_map_equivalence_1}
If two points $p,q \in M_i$ are in the same class of equivalence, then $\mathcal{N}_i(p)=\mathcal{N}_i(q)$.
\end{proposition}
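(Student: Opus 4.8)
The plan is to reduce the statement to the length–preservation result \Cref{prop:leghts_relation} together with the non-degeneracy of the target metric $g^{(n)}$.

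First I would unpack the hypothesis: saying that $p$ and $q$ lie in the same class of $\sim_i$ means exactly $\delta_i(p,q)=0$. Since $M_i$ is path-connected by \Cref{as_1} and $\delta_i(p,q)$ is the infimum of the pseudolengths $\ell_i(\gamma)$ over curves $\gamma\in\pc$ joining $p$ to $q$, there is a minimizing sequence of such curves $\gamma_k$ with $\gamma_k(0)=p$, $\gamma_k(1)=q$ and $\ell_i(\gamma_k)\to 0$.

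Next I would transport these curves to $M_n$. Each $\Gamma_k:=\mathcal{N}_i\circ\gamma_k$ is again a piecewise $\mathcal{C}^1$ curve on $[0,1]$ — being a composition of $\gamma_k$ with the smooth maps $\Lambda_j$ — joining $\mathcal{N}_i(p)$ to $\mathcal{N}_i(q)$ inside $M_n$. By \Cref{prop:leghts_relation} the genuine Riemannian length of $\Gamma_k$ equals the pseudolength of $\gamma_k$, that is $L_n(\Gamma_k)=\ell_i(\gamma_k)$. Hence the Riemannian distance on $M_n$ satisfies $d_n\bigl(\mathcal{N}_i(p),\mathcal{N}_i(q)\bigr)\le L_n(\Gamma_k)=\ell_i(\gamma_k)\to 0$, so $d_n\bigl(\mathcal{N}_i(p),\mathcal{N}_i(q)\bigr)=0$. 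Finally, by \Cref{Assumption_2} the metric $g^{(n)}$ is a bona fide Riemannian metric, so $(M_n,d_n)$ is a metric space (\Cref{def:distance}) and $d_n$ separates points; therefore $\mathcal{N}_i(p)=\mathcal{N}_i(q)$.

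The argument is short, and the only step worth flagging is the first one: because the infimum defining $\delta_i$ need not be attained, it is cleaner to work with a minimizing sequence of curves together with the triangle inequality for $d_n$ than to assert that two $\sim_i$-equivalent points are joined by a single curve of zero pseudolength. That stronger description of the equivalence classes (via curves tangent to the null distribution) is useful elsewhere but is not needed here.
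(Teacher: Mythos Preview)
Your argument is correct and follows essentially the same route as the paper: transport curves from $M_i$ to $M_n$ via $\mathcal{N}_i$, invoke the length-preservation result (\Cref{prop:leghts_relation}), and use non-degeneracy of $g^{(n)}$ to conclude. The one difference is that the paper simply asserts the existence of a single piecewise $\mathcal{C}^1$ curve $\gamma$ joining $p$ to $q$ with $\ell_i(\gamma)=0$, whereas you correctly observe that $\delta_i(p,q)=0$ is an infimum which a priori need not be attained, and work instead with a minimizing sequence together with the inequality $d_n(\mathcal{N}_i(p),\mathcal{N}_i(q))\le L_n(\Gamma_k)$; your version is therefore more careful on this point, and the paper's direct assertion is only justified a posteriori once the later structural results (\Cref{prop:vertical_integrable}, \Cref{prop:equivalence_characterization}) are in hand.
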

\begin{proof}
Let $p,q \in M_i$ two points in the same class of equivalence $[p]$. Then, since $M_i$ is path connected by hypothesis, there is a piecewise $\mathcal{C}^1$ null curve $\gamma:[0,1] \rightarrow M_0$ connecting $q$ and $p$, with $\ell_i(\gamma)=0$. Consider now the curve $\Gamma = \mathcal{N}_i \circ \gamma$ on $M_n$. By \cref{prop:leghts_relation} we conclude that also $L_n(\Gamma)=0$ and being $g^{(n)}$ a Riemannian metric we have that $\mathcal{N}_i(p)=\mathcal{N}_i(q)$. 
\end{proof}
To prove that the sets $M_i / \sim_i$ are actually smooth manifolds, it is convenient to characterize them in a different way. Let $x,y \in M_i$ and consider the equivalence relation $\sim_{\mathcal{N}_i}$ on $M_i$ defined as follows:
\begin{equation*}
\begin{split}
x\sim_{\mathcal{N}_i} y \textit{ if and only if there is a piecewise } \mathcal{C}^1 \textit{ curve } \gamma:[0,1]\rightarrow M_i \textit{ such that }\\
\gamma(0)=x, \gamma(1)=y \textit{ and } \mathcal{N}_i \circ \gamma(s) = \mathcal{N}_i(x) \ \forall s \in [0,1].
\end{split}
\end{equation*}
The composition of two submersions is still a submersion, therefore \cref{as_2} yields that $\mathcal{N}_i=\Lambda_{n}\circ\cdots\circ\Lambda_i$ is a smooth submersion from $M_i$ to $M_n$. Since $\mathcal{N}_i:M_i \rightarrow M_n$ is a submersion, then $d\mathcal{N}_i:T_p M_i \rightarrow T_{\mathcal{N}_i(p)}M_n$ is surjective for every $p \in M_i$ and any point $a \in M_n$ is a regular value for $\mathcal{N}_i$. By the regular level set theorem \cite[Theorem 9.9]{Tu11}, $\mathcal{N}_i^{-1}(a)$ is a regular submanifold of $M_i$ of dimension equal to $\dim(M_i) - \dim(M_n)$. Notice that given two points $x,y \in M_i$ such that $x \sim_{\mathcal{N}_i}y$, then they lie on a connected component of $\mathcal{N}_i^{-1}(x)$. The reason why we introduced the equivalence relation  $\sim_{\mathcal{N}_i}$ is the following proposition:
\begin{proposition}\label{prop:same_quotient}
Let $x,y \in M_i$, then $x \sim_i y$ if and only if $x \sim_{\mathcal{N}_i} y$.
\end{proposition}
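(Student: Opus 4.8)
The plan is to prove the two implications separately. The implication $x \sim_{\mathcal{N}_i} y \Rightarrow x \sim_i y$ is immediate, whereas $x \sim_i y \Rightarrow x \sim_{\mathcal{N}_i} y$ is where the work lies. For the first implication, assume there is a piecewise $\mathcal{C}^1$ curve $\gamma:[0,1]\to M_i$ with $\gamma(0)=x$, $\gamma(1)=y$ and $\mathcal{N}_i\circ\gamma\equiv\mathcal{N}_i(x)$. Then $\Gamma:=\mathcal{N}_i\circ\gamma$ is the constant curve at $\mathcal{N}_i(x)$, so $L_n(\Gamma)=0$, and \Cref{prop:leghts_relation} yields $\ell_i(\gamma)=L_n(\Gamma)=0$. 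Hence $\delta_i(x,y)\le\ell_i(\gamma)=0$, i.e.\ $x\sim_i y$.

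For the converse, suppose $\delta_i(x,y)=0$. By \Cref{prop_neural_map_equivalence_1} we have $\mathcal{N}_i(x)=\mathcal{N}_i(y)$. Since $g^{(i)}$ is the pullback of $g^{(n)}$ through $\mathcal{N}_i$, formula \eqref{eq:metric_pullback} gives $g^{(i)}_p(v,v)=g^{(n)}_{\mathcal{N}_i(p)}(d\mathcal{N}_i(p)v,\,d\mathcal{N}_i(p)v)$ for every $p\in M_i$ and $v\in T_pM_i$; as $g^{(n)}$ is positive definite, $v$ is a null vector exactly when $d\mathcal{N}_i(p)v=0$. Consequently, any piecewise $\mathcal{C}^1$ curve $\gamma$ with $\ell_i(\gamma)=0$ has $\dot\gamma(s)$ null — and hence $\frac{d}{ds}(\mathcal{N}_i\circ\gamma)(s)=d\mathcal{N}_i(\dot\gamma(s))=0$ — for almost every $s$; being absolutely continuous, $\mathcal{N}_i\circ\gamma$ is then constant, equal to $\mathcal{N}_i(\gamma(0))$. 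Thus a single null curve joining $x$ to $y$ automatically satisfies $\mathcal{N}_i\circ\gamma\equiv\mathcal{N}_i(x)$ and witnesses $x\sim_{\mathcal{N}_i}y$ (equivalently, it runs inside the connected component through $x$ of the regular submanifold $\mathcal{N}_i^{-1}(\mathcal{N}_i(x))$). Everything therefore reduces to the claim that the vanishing pseudodistance $\delta_i(x,y)=0$ is actually realised by some piecewise $\mathcal{C}^1$ curve of null pseudolength.

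This last claim is the delicate point and the main obstacle. The hypothesis only supplies a sequence of curves $\gamma_k$ from $x$ to $y$ with $\ell_i(\gamma_k)\to0$, equivalently (again by \Cref{prop:leghts_relation}) loops $\mathcal{N}_i\circ\gamma_k$ based at $\mathcal{N}_i(x)$ in $M_n$ whose Riemannian length tends to $0$. To upgrade this to an honest null curve I would argue in submersion normal form: around any point, $\mathcal{N}_i$ is — up to diffeomorphism — the projection $(u,v)\mapsto v$, and in such coordinates $g^{(i)}$ depends only on the $v$-block, so the pseudolength of a curve equals the length of its $v$-projection and displacements inside a plaque $\{v=\mathrm{const}\}$ are free. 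Covering a near-minimizing $\gamma_k$ by finitely many such charts (compactness of $[0,1]$), one replaces each chart-piece by a free move along a plaque followed by a lift of the corresponding short base arc, keeping the accumulated pseudolength of order $\ell_i(\gamma_k)$, and then passes to the limit $k\to\infty$ with the endpoints $x,y$ fixed to obtain a curve along which $\mathcal{N}_i$ is constant. The step that genuinely requires care is controlling this patching/limiting procedure when the fibers of $\mathcal{N}_i$ fail to be compact, so that the constructed curves do not escape to infinity; here one must exploit the submersion structure — and, in the concrete neural-network setting where the $\mathcal{N}_i$ are compositions of full-rank affine maps with diffeomorphisms, the fact that the fibers are then connected, so that $\sim_{\mathcal{N}_i}$ reduces to ``belonging to the same fiber''.
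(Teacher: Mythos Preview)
Your treatment of $\sim_{\mathcal{N}_i}\Rightarrow\sim_i$ coincides with the paper's. For the converse, the paper's argument is far shorter than yours: it simply asserts that $x\sim_i y$ (i.e.\ $\delta_i(x,y)=0$) already furnishes a piecewise $\mathcal{C}^1$ \emph{null} curve $\gamma$ joining $x$ to $y$, and then concludes---exactly as in your second paragraph---that $\mathcal{N}_i\circ\gamma$ has zero Riemannian length in $M_n$ and is therefore constant. In other words, the paper takes for granted precisely what you flag as ``the delicate point and the main obstacle'', namely that the infimum defining $\delta_i$ is actually realised by some curve.

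You are right to be uneasy. Under \Cref{as_1,as_2,Assumption_2} alone that step cannot be justified, and the proposition itself can fail: take $M_i=\mathbb{R}^2\setminus\{0\}$, $M_n=\mathbb{R}$ with the standard metric, and $\mathcal{N}_i(u,v)=v$. The points $(-1,0)$ and $(1,0)$ have $\delta_i=0$ (the curve $t\mapsto(2t-1,\varepsilon\sin\pi t)$ avoids the origin and has pseudolength $2\varepsilon$), yet they lie in different components of the fiber $\{v=0\}$, so no null curve joins them and $(-1,0)\not\sim_{\mathcal{N}_i}(1,0)$. Your submersion-chart patching cannot be completed in this example, for exactly the escape-to-infinity reason you anticipate. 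What actually rescues the result is connectedness of the fibers: once that holds, $\delta_i(x,y)=0$ forces $d_n(\mathcal{N}_i(x),\mathcal{N}_i(y))\le\delta_i(x,y)=0$, hence $\mathcal{N}_i(x)=\mathcal{N}_i(y)$, and then any path inside the (connected) fiber witnesses $x\sim_{\mathcal{N}_i}y$ directly---no limiting procedure needed. So your instinct to retreat to the concrete neural-network hypotheses is the correct repair; the paper's proof, as written, simply overlooks the gap you have identified.
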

\begin{proof}
If $x \sim_i y$, then there is a piecewise $\mathcal{C}^1$ null curve $\gamma$ with $\gamma(0)=x$  and $\gamma(1)=y$ and we have that $\ell_i(\gamma) = l (\mathcal{N}_i \circ \gamma) = 0$. Since $g^{(n)}$ is a non-degenerate Riemannian metric, $l (\mathcal{N}_i \circ \gamma) = 0$ entails that the tangent vector to $\mathcal{N}_i\circ \gamma (s)$ is the zero vector for every $s \in (0,1)$ and therefore $\mathcal{N}_i\circ \gamma$ is the constant curve $\mathcal{N}_i\circ \gamma(s) = \mathcal{N}_i (x)$. This proves $x \sim_i y \Rightarrow x \sim_{\mathcal{N}_i} y$.
Let us now assume that $x \sim_{\mathcal{N}_i} y$. By definition we know that there is a piecewise $\mathcal{C}^1$ curve $\gamma:[0,1]\rightarrow M_i$ such that $\gamma(0)=x, \gamma(1)=y $ and $\mathcal{N}_i \circ \gamma(s) = \mathcal{N}_i(x) \ \forall s \in [0,1]$. It remains to prove that $\gamma$ is a null curve. This follows from the fact that, being $\mathcal{N}_i \circ \gamma$ a constant curve, then $PL_i(\gamma) = l(\mathcal{N}\circ \gamma)=0$.
\end{proof}
As direct consequences of the above result, we have the following two corollaries. The first is telling us that quotienting $M_i$ by $\sim_i$ or by $\sim_{\mathcal{N}_{i+1}}$ yields the same space, the latter being the converse of \cref{prop_neural_map_equivalence_1}.
\begin{corollary}
$\dfrac{M_{i}}{\sim_{i}} = \dfrac{M_{i}}{\sim_{\mathcal{N}_{i+1}}}$.
\end{corollary}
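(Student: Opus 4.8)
The plan is to read this off directly from \Cref{prop:same_quotient}, since there is essentially nothing new to prove here. Recall that in the paragraph preceding that proposition we introduced, on $M_i$, the equivalence relation identifying two points exactly when they can be joined by a piecewise $\mathcal{C}^1$ curve in $M_i$ along which the tail map $\mathcal{N}_{i+1} : M_i \to M_n$ is constant; this is the relation written $\sim_{\mathcal{N}_{i+1}}$ in the corollary above (it appears with the shifted subscript $\sim_{\mathcal{N}_i}$ in the statement and proof of \Cref{prop:same_quotient}, but it is the same relation — the subscript merely tracks where the unconsumed part of the network is taken to start). \Cref{prop:same_quotient} asserts precisely that, for all $x,y \in M_i$, one has $x \sim_i y$ if and only if $x \sim_{\mathcal{N}_{i+1}} y$. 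Hence $\sim_i$ and $\sim_{\mathcal{N}_{i+1}}$ are literally the same equivalence relation on $M_i$, i.e.\ they cut out the same subset of $M_i \times M_i$.

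From this, the equality of the quotients is immediate. Two equivalence relations on a set that coincide as subsets of the product induce the same partition of that set, hence the same family of equivalence classes, so $M_i / \sim_i$ and $M_i / \sim_{\mathcal{N}_{i+1}}$ have the same underlying set. Moreover the quotient topology depends only on the partition — it is the finest topology for which the canonical projection $\pi_i$ is continuous — so the two quotient spaces also carry the same topology, and are therefore equal, not merely homeomorphic, as the corollary claims. This also makes the accompanying remark transparent: \Cref{prop_neural_map_equivalence_1} supplies the implication $p \sim_i q \Rightarrow \mathcal{N}_{i+1}(p) = \mathcal{N}_{i+1}(q)$, while identifying $\sim_i$ with $\sim_{\mathcal{N}_{i+1}}$ furnishes the reverse implication, up to replacing ``same fibre of $\mathcal{N}_{i+1}$'' by ``same connected component of that fibre''.

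I do not anticipate any genuine obstacle. The only point needing care is the index bookkeeping around the tail maps $\mathcal{N}_i$ (which of $\mathcal{N}_i$ or $\mathcal{N}_{i+1}$ names the tail starting at $M_i$), together with recalling that \Cref{prop:same_quotient} already absorbs the standing hypotheses — $M_i$ path-connected, $g^{(n)}$ a genuine (non-degenerate) Riemannian metric, and $\mathcal{N}_{i+1}$ a submersion as a composition of the submersions $\Lambda_j$ — so that the present corollary needs no further assumptions and no further computation.
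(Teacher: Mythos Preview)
Your proposal is correct and matches the paper's treatment: the paper simply records this corollary as a ``direct consequence'' of \Cref{prop:same_quotient} without giving a separate proof, and your argument spells out exactly that deduction, including the index bookkeeping that reconciles the paper's use of $\sim_{\mathcal{N}_i}$ in the proposition with $\sim_{\mathcal{N}_{i+1}}$ in the corollary. Your additional remark that the quotient topologies coincide is a mild elaboration beyond what the paper states, but it is correct and harmless.
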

\begin{corollary}\label{prop_neural_map_equivalence_2}
If two points $p,q \in M_i$ are connected by a $\mathcal{C}^1$ curve $\gamma:[0,1]\rightarrow M_i$ satisfying $\mathcal{N}_i(p)=\mathcal{N}_i \circ \gamma(s)$ for every $s \in [0,1]$, then they lie in the same class of equivalence.
\end{corollary}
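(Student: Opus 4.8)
The plan is to observe that this statement is nothing more than the reverse implication of \Cref{prop:same_quotient}, rephrased after unwinding the definition of the auxiliary relation $\sim_{\mathcal{N}_i}$. First I would note that a curve of class $\mathcal{C}^1$ is in particular piecewise $\mathcal{C}^1$, so the curve $\gamma$ supplied by the hypothesis is an admissible competitor for $\sim_{\mathcal{N}_i}$. Taking $\gamma(0)=p$ and $\gamma(1)=q$, the requirement $\mathcal{N}_i \circ \gamma(s) = \mathcal{N}_i(p)$ for every $s \in [0,1]$ is exactly the clause appearing in the definition of $p \sim_{\mathcal{N}_i} q$; hence the hypothesis of the corollary says precisely that $p \sim_{\mathcal{N}_i} q$.

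Then I would invoke the equivalence $p \sim_i q \Leftrightarrow p \sim_{\mathcal{N}_i} q$ established in \Cref{prop:same_quotient}, and in particular its direction $p \sim_{\mathcal{N}_i} q \Rightarrow p \sim_i q$, to conclude $\delta_i(p,q)=0$. Recalling that a class of equivalence of the pseudometric space $(M_i,\delta_i)$ is the set $[p]=\{\,r \in M_i \mid \delta_i(p,r)=0\,\}$, this gives $q \in [p]$, i.e.\ $p$ and $q$ lie in the same class of equivalence, which is the assertion.

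As an alternative, entirely self-contained route one can avoid referring to $\sim_{\mathcal{N}_i}$ altogether: since by hypothesis $\mathcal{N}_i \circ \gamma$ is the constant curve $s \mapsto \mathcal{N}_i(p)$, its length measured with the Riemannian metric $g^{(n)}$ vanishes; by \Cref{prop:leghts_relation}, which identifies $\ell_i(\gamma)$ with $L_n(\mathcal{N}_i \circ \gamma)$, this forces $\ell_i(\gamma)=0$, so $\gamma$ is a null curve joining $p$ to $q$ and therefore $\delta_i(p,q)=0$. Either way there is essentially no obstacle here: the corollary is a direct consequence of \Cref{prop:same_quotient} read in the opposite direction, and the only point deserving (minimal) attention is matching the regularity assumption ($\mathcal{C}^1$ versus piecewise $\mathcal{C}^1$) and the endpoint conditions with the definition of $\sim_{\mathcal{N}_i}$.
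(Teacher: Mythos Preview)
Your proposal is correct and matches the paper's approach exactly: the paper presents this corollary without a separate proof, simply noting that it is a direct consequence of \Cref{prop:same_quotient}, which is precisely what you unwind. Your additional self-contained route via \Cref{prop:leghts_relation} is also valid and mirrors the argument used inside the proof of \Cref{prop:same_quotient} itself.
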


The main reason behind the requirement that the maps $\Lambda_i$ are submersions in \cref{as_2} is that we can apply the following proposition \cite{Bou67} (see also \cite{Ser64,Go17,Fer20} for different formulations) to prove that $M_i \sim_i$ is a smooth manifold.
\begin{proposition}[Godement's criterion]\label{prop:godement}
Let $X$ be a smooth manifold and $R \subset X \times X$ be an equivalence relation. The quotient $X/R$ is a smooth manifold if and only if
\begin{itemize}
\item[1)] $R$ is a submanifold of $X \times X$
\item[2)] The projection map on the second component $pr_2:R \subset X \times X \rightarrow X$ is a submersion. 
\end{itemize}
\end{proposition}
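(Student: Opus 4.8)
The plan is to prove the two implications separately. Throughout, write $\pi\colon X\to X/R$ for the canonical projection and, for $a\in X$, $[a]=\pi^{-1}(\pi(a))$ for its equivalence class; ``$X/R$ is a smooth manifold'' is read as ``the quotient topological space $X/R$ carries a smooth structure for which $\pi$ is a submersion'' (such a structure, when it exists, is unique). The three properties of an equivalence relation enter as follows: reflexivity gives $\Delta_X\subseteq R$, so $(a,a)\in R$ and $T_{(a,a)}\Delta_X=\{(v,v):v\in T_aX\}\subseteq T_{(a,a)}R$; symmetry shows the flip $\tau(x,y)=(y,x)$ restricts to a diffeomorphism of $R$, so that, granting 2), the first projection $\mathrm{pr}_1=\mathrm{pr}_2\circ\tau$ is a submersion as well; and transitivity gives $[a]\times[a]\subseteq R$ for every $a$, hence $T_a[a]\times T_a[a]\subseteq T_{(a,a)}R$.

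For the ``only if'' direction, suppose $\pi$ is a submersion. Then $\pi\times\pi$ is a submersion and $R=(\pi\times\pi)^{-1}(\Delta_{X/R})$; since $\Delta_{X/R}$ is an embedded submanifold of $(X/R)\times(X/R)$ and the preimage of an embedded submanifold under a submersion is an embedded submanifold, property 1) holds, with $\dim R=2\dim X-\dim(X/R)$. For 2), take $(x_0,y_0)\in R$, so $\pi(x_0)=\pi(y_0)$, and use the local normal form of the submersion $\pi$ around $x_0$ and around $y_0$ with a common chart of $X/R$: in the resulting product coordinates on $X\times X$ the set $R$ is the locus where the ``base'' coordinates of the two factors agree, so $\mathrm{pr}_2|_R$ is a coordinate projection, hence a submersion.

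For the ``if'' direction, fix $a\in X$. Since $\mathrm{pr}_2$ is a submersion, $\mathrm{pr}_2^{-1}(a)$ is an embedded submanifold of $R$, and because $R\hookrightarrow X\times X$ is an embedding and $\mathrm{pr}_1$ is injective on $\mathrm{pr}_2^{-1}(a)$, the map $\mathrm{pr}_1|_{\mathrm{pr}_2^{-1}(a)}$ is an injective immersion, identifying $[a]$ near $a$ with a submanifold of $X$ of dimension $k:=\dim R-\dim X$. Choose a submanifold $S\ni a$ with $T_aS\oplus T_a[a]=T_aX$, so $\dim S=m:=\dim X-k=2\dim X-\dim R$; this is a local transverse slice. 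Two transversality computations at $(a,a)$ now do the work. First, $R$ is transverse to $S\times S$: any $(u,w)\in T_aX\times T_aX$ splits as $(u_1,w_1)+(u_2,w_2)$ with $u_i,w_i$ in $T_aS$, resp.\ $T_a[a]$, the first term lying in $T_{(a,a)}(S\times S)$ and the second in $T_a[a]\times T_a[a]\subseteq T_{(a,a)}R$; hence $R\cap(S\times S)$ is, near $(a,a)$, an $m$-dimensional submanifold containing the $m$-dimensional diagonal $\Delta_S$, so it coincides with $\Delta_S$ there, which means that, after shrinking $S$ around $a$, $\pi|_S$ is injective. Second, $\mathrm{pr}_1$ restricted to the submanifold $\mathrm{pr}_2^{-1}(S)\subseteq R$ is a submersion at $(a,a)$ --- its differential already hits $T_aS$ along the diagonal directions $(v,v)$ and $T_a[a]$ along the directions $(w,0)$ --- so the $R$-saturation $\mathrm{pr}_1(\mathrm{pr}_2^{-1}(S))$ of $S$ contains a neighbourhood of $a$. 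As $\pi$ is an open map ($\pi^{-1}(\pi(V))=\mathrm{pr}_1(\mathrm{pr}_2^{-1}(V))$ is open whenever $V$ is, both $\mathrm{pr}_i$ being open submersions), it follows that $\pi(S)$ is a neighbourhood of $\pi(a)$ and that $\pi|_S\colon S\to\pi(S)$ is a homeomorphism.

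It remains to assemble the atlas and dispatch the point-set housekeeping. For each $a$ take the chart $\big(\pi(S),\ \varphi_S\circ(\pi|_S)^{-1}\big)$ with $\varphi_S$ a chart of $S$; by the previous step these are homeomorphisms onto open sets covering $X/R$. For two overlapping slices $S,S'$ the transition realizes the correspondence $s\mapsto s'$ with $s\sim s'$, which is smooth because $R\cap(S\times S')$ is again a submanifold (the same transversality computation, using $[s]\times[s']\subseteq R$) on which the two coordinate projections to $S$ and to $S'$ are local diffeomorphisms --- injective by uniqueness of $s'$, and immersive because $\ker d\,\mathrm{pr}_1$ meets $TS'$ only in $0$. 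Second countability of $X/R$ is inherited from $X$ through the continuous open surjection $\pi$; in the slice charts $\pi$ is a projection, hence a submersion, which also pins the smooth structure down uniquely. The only genuinely delicate point is the Hausdorff axiom: for an open quotient, $X/R$ is Hausdorff precisely when $R$ is closed in $X\times X$, so one must read ``$R$ is a submanifold'' as ``$R$ is a closed embedded submanifold'' --- which holds in all the situations of interest here and is what also makes the slices above embedded rather than merely immersed. I expect the main obstacle to be exactly this slice construction together with the transversality bookkeeping in the ``if'' direction; the ``only if'' direction and the topological housekeeping are comparatively routine.
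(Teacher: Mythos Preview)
The paper does not actually prove Godement's criterion; it states the result and cites it from the literature (Bourbaki, with alternative formulations attributed to Serre and others), and then \emph{applies} it in the subsequent proposition to show that $M_i/\sim_i$ is a smooth manifold. So there is no proof in the paper to compare your attempt against.

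That said, your sketch is essentially the standard proof of the theorem as found in the cited sources. The ``only if'' direction is routine. In the ``if'' direction your construction of a transverse slice $S$ and the two transversality computations are correct: the key inputs are exactly the ones you isolate---reflexivity puts the diagonal inside $R$, transitivity gives $T_a[a]\times T_a[a]\subseteq T_{(a,a)}R$, and symmetry makes $\mathrm{pr}_1$ a submersion as well, so the saturation map $\mathrm{pr}_1|_{\mathrm{pr}_2^{-1}(S)}$ is a local diffeomorphism at $(a,a)$. You also correctly flag the one delicate point, namely that Hausdorffness of $X/R$ requires $R$ to be closed in $X\times X$; the paper's application satisfies this automatically since the relevant equivalence classes are connected components of level sets of a continuous map.
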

Now we can prove that $M_i \sim_i$ is a smooth manifold.
\begin{proposition}\label{prop:characterization_of_quotient}
$\dfrac{M_{i}}{\sim_{i}}$ is a smooth manifold of dimension $dim(\mathcal{N}(M_0))$.
\end{proposition}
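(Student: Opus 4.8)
The plan is to verify the two hypotheses of Godement's criterion (\Cref{prop:godement}) with $X=M_i$ and $R\subseteq M_i\times M_i$ the graph of $\sim_i$, and then to compute the dimension. The first move is to rewrite $R$ in a usable form: by \Cref{prop:same_quotient} we have $\sim_i\,=\,\sim_{\mathcal{N}_i}$, so $(x,y)\in R$ exactly when $x$ and $y$ lie on a common connected component of a fibre of $\mathcal{N}_i$. Recall from \Cref{as_2} (and the fact that a composition of submersions is a submersion) that $\mathcal{N}_i=\Lambda_n\circ\cdots\circ\Lambda_i\colon M_i\to M_n$ is a smooth submersion, hence so is $\mathcal{N}_i\times\mathcal{N}_i\colon M_i\times M_i\to M_n\times M_n$. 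The diagonal $\Delta_{M_n}\subset M_n\times M_n$ is an embedded submanifold of codimension $\dim M_n$, and a submersion is transverse to every submanifold, so $S:=(\mathcal{N}_i\times\mathcal{N}_i)^{-1}(\Delta_{M_n})=\{(x,y):\mathcal{N}_i(x)=\mathcal{N}_i(y)\}$ is an embedded submanifold of $M_i\times M_i$ of dimension $2\dim M_i-\dim M_n$.

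The second move, which I expect to be the main obstacle, is to upgrade $S$ to $R$: one has $R\subseteq S$, and I would show that $R$ is open in $S$, hence itself an embedded submanifold of the same dimension. The subtlety is that replacing ``equal image under $\mathcal{N}_i$'' by ``same connected component of the fibre'' could a priori spoil the submanifold structure; the point is that the fibres of a submersion are locally connected. Concretely, by the local normal form for submersions every point of $M_i$ lies in a chart in which $\mathcal{N}_i$ is a linear projection, so fibres are locally connected coordinate slices; given $(x,y)\in R$ and a path inside the fibre joining $x$ to $y$, one covers the path by finitely many such charts and uses local triviality of $\mathcal{N}_i$ along it to check that every pair of $S$ sufficiently close to $(x,y)$ still lies on a common fibre component, i.e.\ belongs to $R$. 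When the fibres of $\mathcal{N}_i$ happen to be connected one simply has $R=S$ and this step is vacuous.

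It remains to check the second hypothesis of Godement's criterion and to finish the dimension count. The projection $pr_2\colon R\to M_i$, $(x,y)\mapsto y$, is the restriction to the open subset $R$ of the projection $S\to M_i$, which in turn is obtained from the submersion $\mathcal{N}_i$ by base change along $\mathcal{N}_i$ (it fits into the evident pullback square built from two copies of $\mathcal{N}_i$); since base changes of submersions are submersions, $pr_2|_R$ is a submersion --- alternatively, surjectivity of its differential is immediate in the projection charts above. Godement's criterion then yields that $M_i/\sim_i$ is a smooth manifold, and since the quotient map $\pi_i\colon M_i\to M_i/\sim_i$ is a submersion whose fibres are the equivalence classes --- which are connected components of fibres of $\mathcal{N}_i$, of dimension $\dim M_i-\dim M_n$ --- we get $\dim\big(M_i/\sim_i\big)=\dim M_i-(\dim M_i-\dim M_n)=\dim M_n$. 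Finally $\dim M_n=\dim(\mathcal{N}(M_0))$, because $\mathcal{N}=\mathcal{N}_1$ is a submersion and hence an open map, so $\mathcal{N}(M_0)$ is a nonempty open subset of $M_n$ and has the same dimension. This yields $\dim\big(M_i/\sim_i\big)=\dim(\mathcal{N}(M_0))$.
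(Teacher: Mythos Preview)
Your proof is correct and follows the same strategy as the paper: identify $\sim_i$ with $\sim_{\mathcal{N}_i}$ via \Cref{prop:same_quotient} and then verify Godement's criterion for the graph of that relation. Your argument is in fact more explicit than the paper's---you realise the graph $R$ as an open subset of the transverse preimage $S=(\mathcal{N}_i\times\mathcal{N}_i)^{-1}(\Delta_{M_n})$ and obtain the submersion property of $pr_2$ by base change, whereas the paper simply asserts that the union $\bigcup C_p\times C_p$ is a submanifold and describes $pr_2|_R$ rather loosely; your direct dimension count likewise replaces the paper's citation to \cite{Fer20}.
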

\begin{proof}
We prove that the quotient $M_i / \sim_i$ is a smooth manifold using Godement's criterion (\cref{prop:godement}). The graph $\mathcal{G}_{i+1}$ of $\sim_{\mathcal{N}_{i}}$ is the union of $C_p \times C_p$, with $C_p$ a connected component of $\mathcal{N}_i^{-1}(p)$, with $p \in \mathcal{N}_i(M_{i-1}) \subseteq M_n$ and therefore $\mathcal{G}_{i+1}$ is a submanifold of $M_i \times M_i$. Furthermore, the restriction of the projection $pr_2$ to $R$ is the restriction of the identity map to $C_p$ for some $p \in M_i$, which is a diffeomorphism with its image and therefore a submersion. The statement about the dimension follows from the proof of $2)\Rightarrow 1)$ of \cref{prop:godement}, see \cite[Lemma 9.4]{Fer20}, taking in account that $T_p N_i = \dim(\Ker(g^{(i)}))$ is constant.
\end{proof}
This proposition, along with \cite[Lemma 9.4 and Lemma 9.9]{Fer20}, yields that the classes of equivalence $[p]$ are the leaves of a simple foliation of $M_i$ and that $\pi_i$ is a smooth submersion. Gathering all the results obtained above and considering that the tangent vector $\dot{\gamma}(s)$ of a null curve $\gamma:[0,1]\rightarrow M_i$ is in $T_{\gamma(s)}N_i$ for every $s \in [0,1]$, we also have the following lemma. We refer to \ref{appendix} for the notions of integrable distribution and vertical bundle.
\begin{lemma}\label{prop:vertical_integrable}
$\pi_i : M_i \rightarrow M_i/\sim_i$ is a smooth fiber bundle, with $\Ker(d \pi_i) = \mathcal{V}M_i$, which is therefore an integrable distribution. 
\end{lemma}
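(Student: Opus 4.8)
The plan is to split the statement into its concrete core — identifying $\Ker(d\pi_i)$ with the vertical distribution $\mathcal{V}M_i$ and deducing integrability — and the more delicate topological part, namely upgrading ``$\pi_i$ is a submersion'' to ``$\pi_i$ is locally trivial''. First I would use the results preceding the statement: $\pi_i$ is a smooth submersion, and its fibre through a point $p$ is the equivalence class $[p]=\pi_i^{-1}(\pi_i(p))$, which by \Cref{prop:same_quotient} is a connected component of the level set $\mathcal{N}_i^{-1}(\mathcal{N}_i(p))$; since $\mathcal{N}_i$ is a submersion, the regular level set theorem makes this a regular submanifold of $M_i$. Hence $\Ker(d(\pi_i)_p)=T_p[p]=T_p\big(\mathcal{N}_i^{-1}(\mathcal{N}_i(p))\big)=\Ker\big(d(\mathcal{N}_i)_p\big)$.

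Next I would match $\Ker(d(\mathcal{N}_i)_p)$ with $(\mathcal{V}M_i)_p=T_pN_i$. Since $g^{(i)}=\mathcal{N}_i^{*}g^{(n)}$, for every $v\in T_pM_i$ one has $g^{(i)}_p(v,v)=g^{(n)}_{\mathcal{N}_i(p)}\big(d(\mathcal{N}_i)_p v,\,d(\mathcal{N}_i)_p v\big)$, and as $g^{(n)}$ is positive definite this vanishes precisely when $d(\mathcal{N}_i)_p v=0$; therefore $T_pN_i=\Ker(d(\mathcal{N}_i)_p)$ and, combining with the previous step, $\Ker(d\pi_i)=\mathcal{V}M_i$. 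Integrability of $\mathcal{V}M_i$ is then immediate, since it is the vertical subbundle $\Ker(d\pi_i)$ of a submersion and its maximal integral manifolds are exactly the connected fibres of $\pi_i$, i.e.\ the classes $[p]$; in particular $\mathcal{V}M_i$ has constant rank $\dim\Ker(g^{(i)})=\dim M_i-\dim M_n$, recovering the simple foliation mentioned above without appealing to Frobenius.

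For the fibre bundle property I would argue local triviality around each $q_0\in M_i/\sim_i$. Fix $p_0\in\pi_i^{-1}(q_0)$. The submersion $\pi_i$ admits a local section $s\colon U\to M_i$ with $s(q_0)=p_0$; on a neighbourhood of the fibre $F_0:=\pi_i^{-1}(q_0)$ choose a distribution $\mathcal{H}$ complementary to $\mathcal{V}M_i$ (an Ehresmann connection, for instance the orthogonal complement with respect to an auxiliary Riemannian metric on $M_i$), and, shrinking $U$ to a coordinate ball, define $\Phi\colon U\times F_0\to \pi_i^{-1}(U)$ by following horizontal lifts of the radial segments of $U$ starting from the points of $F_0$. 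One checks that $\Phi$ is a diffeomorphism over $U$, giving the trivialisation, and hence that $\pi_i$ is a smooth fibre bundle with fibre $F_0$ and $\Ker(d\pi_i)=\mathcal{V}M_i$.

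The main obstacle is the completeness of $\mathcal{H}$ over $U$: a priori horizontal lifts could escape $M_i$ before the radial segments reach the boundary of $U$. I would remove it by staying inside the foliated charts used in \Cref{prop:characterization_of_quotient}: the proof of $2)\Rightarrow 1)$ of Godement's criterion (see also \cite{Fer20}) furnishes, for small such $U$, an explicit identification of $\pi_i^{-1}(U)$ with a product, and since $F_0$ is connected and a simple foliation has leaves with trivial holonomy, these local product structures patch along $F_0$ into the global trivialisation over $U$. Assembling the three steps yields the lemma.
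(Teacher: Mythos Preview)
Your argument is correct and considerably more explicit than what the paper offers. The paper does not give a standalone proof of this lemma at all: it simply records, in the paragraph preceding the statement, that \Cref{prop:characterization_of_quotient} together with \cite[Lemmas 9.4 and 9.9]{Fer20} yields that the equivalence classes $[p]$ are the leaves of a \emph{simple foliation} of $M_i$ with $\pi_i$ a smooth submersion, and then observes that the tangent vector of a null curve lies in $T_{\gamma(s)}N_i$; the lemma is then stated as the outcome of ``gathering all the results obtained above''. In particular, the paper outsources the local triviality of $\pi_i$ entirely to the cited lemmas in \cite{Fer20} about simple foliations, and never spells out the identification $\Ker(d\pi_i)=\Ker(d\mathcal{N}_i)=T_pN_i$ that you make explicit via $g^{(i)}=\mathcal{N}_i^{*}g^{(n)}$.

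Your route differs mainly in the fibre-bundle step: instead of invoking the simple-foliation machinery from \cite{Fer20} as a black box, you attempt a direct local trivialisation via an Ehresmann connection and horizontal lifts, correctly flag the completeness issue, and then resolve it by returning to the foliated charts underlying Godement's criterion and the trivial holonomy of a simple foliation. This is more self-contained and makes the geometric mechanism visible, at the cost of some length; the paper's approach is terser but leans on the reader knowing that a simple foliation is, essentially by definition, a fibre bundle over its leaf space. Both approaches rest on the same structural fact, so your proof is a faithful (and more detailed) unpacking of what the paper leaves implicit.
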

Using the definitions of equivalence class and vertical bundle the following proposition follows slavishly from the previous lemma.
\begin{proposition}\label{prop:equivalence_characterization}
Every class of equivalence $[p]$ is a path-connected submanifold of $M_i$ and coincide with the fiber of the bundle over $p$.
\end{proposition}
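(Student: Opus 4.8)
The plan is to reduce the statement to two facts already at our disposal: the fiber-bundle structure of $\pi_i$ from \Cref{prop:vertical_integrable}, and the description of $[p]$ via $\sim_{\mathcal{N}_i}$ from \Cref{prop:same_quotient}. The proposition is then little more than an unravelling of definitions.

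First I would record that, since $\pi_i : M_i \rightarrow M_i/\sim_i$ is by construction the quotient map of the relation $\sim_i$, for every $q \in M_i$ we have $\pi_i^{-1}(\pi_i(p)) = \{ q \in M_i : q \sim_i p \} = [p]$. Thus, viewed as a subset of $M_i$, the equivalence class $[p]$ is exactly the fiber of the bundle $\pi_i$ over the point $\pi_i(p) \in M_i/\sim_i$ (this is what is meant by ``the fiber over $p$''), so the last assertion of the proposition needs nothing further. Moreover, by \Cref{prop:vertical_integrable} the map $\pi_i$ is a smooth fiber bundle, and fibers of a smooth fiber bundle are embedded submanifolds (diffeomorphic to the typical fiber); hence $[p]$ is a submanifold of $M_i$. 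Equivalently, one may use \Cref{prop:same_quotient} to write $[p] = \{ q : q \sim_{\mathcal{N}_i} p \}$ and identify it with the connected component through $p$ of the regular level set $\mathcal{N}_i^{-1}(\mathcal{N}_i(p))$, which is a regular submanifold by the regular level set theorem, as already used in the proof of \Cref{prop:characterization_of_quotient}.

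Next I would prove path-connectedness directly from the definition of $\sim_{\mathcal{N}_i}$. Given $x, y \in [p]$ we have $x \sim_{\mathcal{N}_i} p \sim_{\mathcal{N}_i} y$, so there is a piecewise $\mathcal{C}^1$ curve $\gamma : [0,1] \rightarrow M_i$ with $\gamma(0) = x$, $\gamma(1) = y$ and $\mathcal{N}_i \circ \gamma(s) = \mathcal{N}_i(x)$ for all $s$. Restricting $\gamma$ to $[0,s]$ shows $\gamma(s) \sim_{\mathcal{N}_i} x$, hence $\gamma(s) \in [x] = [p]$ for every $s$; so $\gamma$ is a path inside $[p]$ joining $x$ and $y$, and $[p]$ is path-connected.

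There is essentially no hard step: the content is carried entirely by \Cref{prop:vertical_integrable} and \Cref{prop:same_quotient}. The only point needing a little care is, if one takes the ``connected component of a level set'' route, the claim that $[p]$ is the whole component and not a proper subset of it; this uses that a connected smooth manifold is path-connected and that a continuous path in it can be taken piecewise $\mathcal{C}^1$, so that membership in the component does imply $\sim_{\mathcal{N}_i}$-equivalence. Reading ``submanifold'' off the fiber-bundle structure of \Cref{prop:vertical_integrable} instead sidesteps this altogether, which is why I would present the argument in that order.
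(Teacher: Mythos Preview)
Your proposal is correct and follows essentially the same approach as the paper: the paper's proof is a one-liner stating that the result ``follows slavishly from the previous lemma'' (\Cref{prop:vertical_integrable}) together with the definitions of equivalence class and vertical bundle, and your argument is precisely an unpacking of that claim. Your explicit verification of path-connectedness via the definition of $\sim_{\mathcal{N}_i}$ and your remark about the alternative level-set route are welcome elaborations, but they do not depart from the paper's strategy.
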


\begin{proposition}\label{prop:N_tilde_smooth_bijective}
There is a surjective smooth map $\widetilde{\mathcal{N}}_i:M_i/\sim_i \rightarrow \mathcal{N}(M_0) \subseteq M_n$ such that the following diagram commutes:
\begin{equation}\label{eq:diagram}
\begin{tikzcd}
M_i \arrow[r, "\mathcal{N}_i"] \arrow[d, "\pi_i"']  & \mathcal{N}(M_0) \\
M_i/\sim_i \arrow[ru, "\widetilde{\mathcal{N}}_i"'] &    
\end{tikzcd}
\end{equation}
\end{proposition}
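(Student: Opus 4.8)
The plan is to produce $\widetilde{\mathcal{N}}_i$ from the universal property of the smooth quotient $\pi_i$. First I would define it set-theoretically: for a class $[p]\in M_i/\!\sim_i$ set $\widetilde{\mathcal{N}}_i([p]):=\mathcal{N}_i(p)$. This is well posed exactly by \Cref{prop_neural_map_equivalence_1}: if $[p]=[q]$, i.e.\ $p\sim_i q$, then $\mathcal{N}_i(p)=\mathcal{N}_i(q)$, so the assignment does not depend on the chosen representative. By construction $\widetilde{\mathcal{N}}_i\circ\pi_i=\mathcal{N}_i$, which is the commutativity of \Cref{eq:diagram}; moreover, since $\pi_i$ is surjective, $\widetilde{\mathcal{N}}_i$ is the unique map with this property.

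Next I would upgrade the set map to a smooth one. By \Cref{prop:characterization_of_quotient} and the discussion following it (equivalently, by \Cref{prop:vertical_integrable}, $\pi_i$ being the projection of a smooth fiber bundle), $\pi_i:M_i\to M_i/\!\sim_i$ is a surjective smooth submersion. For such a map, a function out of $M_i/\!\sim_i$ is smooth if and only if its composition with $\pi_i$ is smooth (a standard property of surjective submersions, see e.g.\ \cite{Tu11,Fer20}). Applying this to $\widetilde{\mathcal{N}}_i$ and using that $\widetilde{\mathcal{N}}_i\circ\pi_i=\mathcal{N}_i=\Lambda_n\circ\cdots\circ\Lambda_i$ is smooth, being a composition of the smooth maps $\Lambda_n,\dots,\Lambda_i$, we conclude that $\widetilde{\mathcal{N}}_i$ is smooth.

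Finally, surjectivity is immediate from that of $\pi_i$: since $\widetilde{\mathcal{N}}_i(M_i/\!\sim_i)=\widetilde{\mathcal{N}}_i(\pi_i(M_i))=\mathcal{N}_i(M_i)$, the map $\widetilde{\mathcal{N}}_i$ is onto the image $\mathcal{N}_i(M_i)\subseteq M_n$, which under the standing conventions on the sequence \Cref{def:sequence_of_maps} is precisely $\mathcal{N}(M_0)$ — the same identification already used in \Cref{prop:leghts_relation}. This yields the statement.

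I do not expect a genuine obstacle here: the whole content is the universal property of the quotient by a surjective submersion, and every ingredient — that $\pi_i$ is a surjective smooth submersion, and that $\mathcal{N}_i$ is smooth and constant on $\sim_i$-classes — has already been established above. The only point that needs a little care, more bookkeeping than difficulty, is matching the codomain, i.e.\ checking $\mathcal{N}_i(M_i)=\mathcal{N}(M_0)$; if one instead declares the codomain of $\mathcal{N}_i$ to be its image from the outset, surjectivity of $\widetilde{\mathcal{N}}_i$ becomes automatic, and the proposition reduces entirely to the well-definedness and smoothness steps.
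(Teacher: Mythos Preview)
Your proposal is correct and follows essentially the same approach as the paper: both invoke the universal property of the quotient to define $\widetilde{\mathcal{N}}_i$ and then use that $\pi_i$ is a surjective smooth submersion (the paper citing \cite[Theorem 4.29]{Lee12}) to obtain smoothness, with surjectivity by construction. Your version is more explicit---particularly in flagging the codomain bookkeeping $\mathcal{N}_i(M_i)=\mathcal{N}(M_0)$---but the substance is identical.
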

\begin{proof}
The universal property of quotients yields that there is a unique continuous map $\widetilde{\mathcal{N}}_i:M_i/\sim_i \rightarrow M_n$ such that $ \widetilde{N}_i \circ \pi_i = \mathcal{N}_i$; Being  $\mathcal{N}_i$ and $\pi_i$ smooth maps, also $\widetilde{\mathcal{N}}_i$ is smooth by \cite[Theorem 4.29]{Lee12}. At last, we note that the map $\mathcal{N}$ is surjective by construction.
\end{proof}

In the particular case in which the counter--image of every $a \in M_n$ through $\mathcal{N}_i$ is connected, we have this further result.
\begin{proposition}\label{prop:homeo}
Suppose that $\mathcal{N}_i^{-1}(a)$ is a connected manifold for every $a \in M_n$. Then the map $\widetilde{\mathcal{N}}_i:M_i/\sim_i \rightarrow M_n$ given in \eqref{eq:diagram} is a homeomorphism.
\end{proposition}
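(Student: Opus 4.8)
The plan is to upgrade \cref{prop:N_tilde_smooth_bijective}: there we already know that $\widetilde{\mathcal{N}}_i$ is a continuous surjection onto $\mathcal{N}(M_0)\subseteq M_n$ making the triangle \eqref{eq:diagram} commute, so what remains is to prove that $\widetilde{\mathcal{N}}_i$ is injective and that it is an open map — a continuous open bijection being automatically a homeomorphism. (When every fiber $\mathcal{N}_i^{-1}(a)$, $a\in M_n$, is nonempty one also gets $\mathcal{N}(M_0)=M_n$, so the target is then all of $M_n$, as phrased in the statement.)

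First I would prove injectivity. Suppose $\widetilde{\mathcal{N}}_i([p])=\widetilde{\mathcal{N}}_i([q])=:a$. Since $\widetilde{\mathcal{N}}_i\circ\pi_i=\mathcal{N}_i$, this forces $\mathcal{N}_i(p)=\mathcal{N}_i(q)=a$, i.e.\ $p,q\in\mathcal{N}_i^{-1}(a)$. As recalled just before \cref{prop:same_quotient}, $\mathcal{N}_i^{-1}(a)$ is a regular submanifold of $M_i$; by the hypothesis of the proposition it is connected, hence path-connected, and in fact any two of its points are joined by a piecewise-$\mathcal{C}^1$ curve $\gamma:[0,1]\to\mathcal{N}_i^{-1}(a)\subseteq M_i$ with $\gamma(0)=p$, $\gamma(1)=q$. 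Along it $\mathcal{N}_i\circ\gamma(s)=a=\mathcal{N}_i(p)$ for every $s$, so \cref{prop_neural_map_equivalence_2} yields $p\sim_i q$, i.e.\ $[p]=[q]$; hence $\widetilde{\mathcal{N}}_i$ is a continuous bijection onto $\mathcal{N}(M_0)$.

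Next I would show $\widetilde{\mathcal{N}}_i$ is open. Recall that $\pi_i$ is an open (quotient) map, as noted after \cref{prop:characterization_of_quotient} and in \cref{prop:vertical_integrable}, and that $\mathcal{N}_i$ is a submersion (established before \cref{prop:same_quotient}), hence an open map. For $U\subseteq M_i/\sim_i$ open, $\pi_i^{-1}(U)$ is open in $M_i$, and since $\pi_i$ is surjective $\pi_i(\pi_i^{-1}(U))=U$, so
\[
\widetilde{\mathcal{N}}_i(U)=\big(\widetilde{\mathcal{N}}_i\circ\pi_i\big)\big(\pi_i^{-1}(U)\big)=\mathcal{N}_i\big(\pi_i^{-1}(U)\big)
\]
is open. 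Thus $\widetilde{\mathcal{N}}_i$ is a continuous open bijection, hence a homeomorphism. As an alternative route, differentiating $\widetilde{\mathcal{N}}_i\circ\pi_i=\mathcal{N}_i$ shows $d\widetilde{\mathcal{N}}_i$ is everywhere surjective, and since $\dim(M_i/\sim_i)=\dim\mathcal{N}(M_0)$ by \cref{prop:characterization_of_quotient}, $\widetilde{\mathcal{N}}_i$ is a bijective local diffeomorphism, hence even a diffeomorphism onto $\mathcal{N}(M_0)$.

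The only genuinely delicate point is the step from ``$\mathcal{N}_i^{-1}(a)$ connected'' to ``any two of its points are joined by a curve of the regularity required by \cref{prop_neural_map_equivalence_2}''; this is standard for smooth manifolds (connected smooth manifolds are smoothly path-connected), so I do not expect a real obstacle — everything else is routine manipulation of maps already known to be open.
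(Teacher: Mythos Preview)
Your proof is correct and follows essentially the same strategy as the paper: continuity and surjectivity come from \cref{prop:N_tilde_smooth_bijective}, injectivity from the connectedness hypothesis together with \cref{prop:same_quotient}/\cref{prop_neural_map_equivalence_2}, and the remaining step is to show $\widetilde{\mathcal{N}}_i^{-1}$ is continuous. The only cosmetic difference is in that last step: the paper invokes the commutativity of \eqref{eq:diagram} together with $\pi_i$ being open to conclude directly that $\widetilde{\mathcal{N}}_i^{-1}$ is continuous, whereas you argue equivalently that $\widetilde{\mathcal{N}}_i$ is open via $\widetilde{\mathcal{N}}_i(U)=\mathcal{N}_i(\pi_i^{-1}(U))$ and the openness of the submersion $\mathcal{N}_i$ --- your formulation is arguably cleaner, and your bonus observation that one in fact gets a diffeomorphism (via \cref{prop:characterization_of_quotient} and the chain rule) goes a bit beyond what the paper states.
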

\begin{proof}
We already know that $\widetilde{\mathcal{N}}_i$ is continuous and surjective. By definition of $\sim_i$, if $\mathcal{N}_i^{-1}(a)$ is connected for every $a \in M_n$, $\widetilde{\mathcal{N}}_i$  is also injective. Considering that the diagram in \eqref{eq:diagram} commutes, the fact that $\mathcal{N}_i$ is continuous and being $\pi_i$ an open map yield that also $\widetilde{\mathcal{N}}^{-1}_i$ is continuous.
\end{proof}

\section{Applications to Neural Networks}\label{sec:Application to Neural Networks}

We present some applications of the theory previously developed. This section introduces some applications of the theoretical framework; a first in--depth analysis of such applications is carried in \cite{BeMa21}.

\subsection{Equivalence classes of a neural network}\label{subsec:equivalence}
One of the many applications of neural networks is to  gain some information on data in $M_0$, assigning to every point a value establishing how much it satisfies a certain property. A notable example is a classification task: Given a fixed number of classes, one would like to know the probability that a point belongs to a certain class. The notion of equivalence class introduced in \Cref{sec:main} allow us to gain some information on the way the network sees the input manifold $M_0$. In particular, the results of \cref{sec:main} entails the following facts:
\begin{enumerate}[label=\arabic*)]
\item If two points $x,y$ in the input manifold $M_0$ are in the same class of equivalence with respect to $\sim_0$, then $\mathcal{N}(x)=\mathcal{N}(y)$.
\item Given a point $p$ in the output space $M_n$, the counterimage $\mathcal{N}^{-1}(p)$ is a smooth manifold, whose connected components are classes of equivalences in $M_0$ with respect to $\sim_0$. This means that a necessary, but not sufficient, condition for two points $x,y \in M_0$ to be in the same class of equivalence is that $\mathcal{N}(x)=\mathcal{N}(y)$.
\item Any class of equivalence $[x]$, $x \in M_0$, is a maximal integral submanifold of $\mathcal{V}M_0$.
\end{enumerate}
Together, these three facts allow us to build the equivalence class of a point $x \in M_0$. Since $\mathcal{V}M_0$ is an integrable distribution by \Cref{prop:vertical_integrable} -- with $\mathcal{V}M_0 = \Ker(g^{(0)})$ -- then we can find $\dim(\Ker(g^{(0)}))$ vector fields which are a basis of $T_x M_0$ for every point $x \in M_0$ or, in other words, they span $\Ker(g^{(0)}_x)$ over each point $x \in M_0$. In particular this means that we can find $\dim(\Ker(g^{(0)}_x))$ linearly independent eigenvectors of $g^{(0)}_x$ associated with the null eigenvalue depending smoothly on the point $x$, a result which is in general non-trivial if a matrix depends on several parameters \cite{Rell69,Kato66}. Now we can build all the null curves, since we know the null vectors over each point. By definition, a null curve passing through a point $p \in M_0$ is a solution of the following Cauchy problem:
\begin{equation}\label{eq:cauchy_null}
\begin{cases}
\dot{\gamma} = v\\
\gamma(0) = p
\end{cases}
\end{equation}
with $v$ a smooth vector field such that $v_x \in \Ker(g^{(0)}_x)$ for every $x \in M_0$. An equivalence class $p$ belongs to is the set given by the union of all the null curves passing through $p$. From a theoretical point of view, we solved the problem of building the classes of equivalence of $M_0$.
The observation about the linearly independent eigenvectors of $g^{(0)}_x$ associated with the null eigenvalue depending smoothly on the point $x$ has an important practical consequence in the case $M_0 = \mathbb{R}^{d_0}$, as allows one to build an approximation of an equivalence class by triangulations or, in other words, a way to numerically build  an approximation of the connected components of $\mathcal{N}^{-1}(p)$ for some point $p \in M_n$. We discuss the SiMEC algorithm -- at least in the one-dimensional case -- in \cite{BeMa21}.

\begin{remark}
The same reasoning can be applied to the other manifolds $M_i$, in order to construct the equivalence classes of each manifold of the sequence realizing the neural network.
\end{remark}

\subsection{Level curves}\label{subsec:level}

Suppose now to consider a non-linear regression task in which we trained a neural network to approximate a non-linear function $f: M_0\rightarrow M_n$. By definition a level set of $f$ is the set of the points $x \in M_0$ such that $f(x)=k$ for some $k$ in the range of $f$. Comparing this definition with that of equivalence class of $x \in M_0$, we note that, whenever a level curve is a connected set, then it coincides with the set of the points which are in the same class of equivalence of $x$. In general, if we consider non-connected level sets, then each connected component of a level set is a class of equivalence. For example, let us consider the Ackley function
\begin{equation}
a(x,y)=-20 \exp \left[-0.2{\sqrt {0.5\left(x^{2}+y^{2}\right)}}\right]-\exp \left[0.5\left(\cos 2\pi x+\cos 2\pi y\right)\right]+e+20
\end{equation}
Every connected component of the level set of $f$ is an equivalence class.
\begin{figure}[h!]
\centering
\begin{subfigure}[t]{0.45\textwidth}
\includegraphics[width=\textwidth]{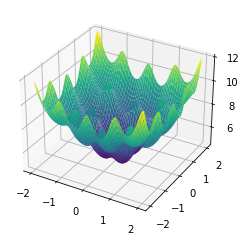} 
\caption{Plot of the Ackley function.}
\end{subfigure}
\begin{subfigure}[t]{0.45\textwidth}
\includegraphics[width=\textwidth]{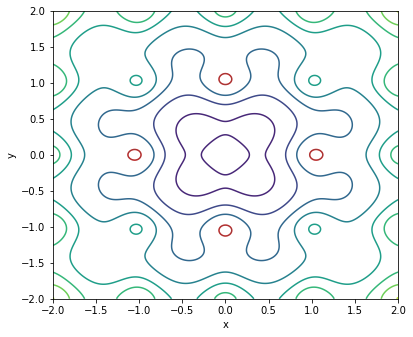} 
\caption{Contour plot of the Ackley function. The four curves in red belong to the same level set, but they form different equivalence classes.}
\end{subfigure}
\label{fig:ackley_function}
\end{figure}

To clarify the connection between equivalence classes and level curves, we present the following toy example. Consider the function $f(x_0,x_1) = \ln\left(1+(1+e^{2x_0+x_1})^4 \right)$. In general, by the universal approximation theorem, neural networks are only able to approximate functions, under suitable hypothesis on the structure of a neural network and on the function we want to learn \cite{Cyb89,HSW89,Han19,Kid20,Kr20,Kr21}. We made this particular choice because a neural network can learn this function exactly. Indeed, if we consider a shallow network of the form
\begin{equation}
\label{eq:example}
\begin{tikzcd}
M_{0} \arrow[r, "\Lambda_1"] & M_{1} \arrow[r, "\Lambda_2"] & M_{2}
\end{tikzcd}
\end{equation}
with $M_0 = \mathbb{R}^2$, $M_1 = \mathbb{R}$ and $M_2=\mathbb{R}$, such that $\Lambda_1$ and $\Lambda_2$ are fully connected softplus layers whose matrices of weights are given by 
\begin{equation}\label{eq:Exweigths}
A^{(1)} = 
\begin{pmatrix}
2 & 1 \\ 
1 & 0
\end{pmatrix}
\quad 
A^{(2)} = \begin{pmatrix}
4 & 0
\end{pmatrix} 
\end{equation}
and such that all the biases are null, then the neural network map $\mathcal{N}=\Lambda_2 \circ \Lambda_1$ in Cartesian coordinates is the function $f$. See \Cref{fig:example} for a classical visual inspection of the network employed in this example.
\begin{figure}[hbtp]
	\begin{center}
	\includegraphics[width=0.8\textwidth]{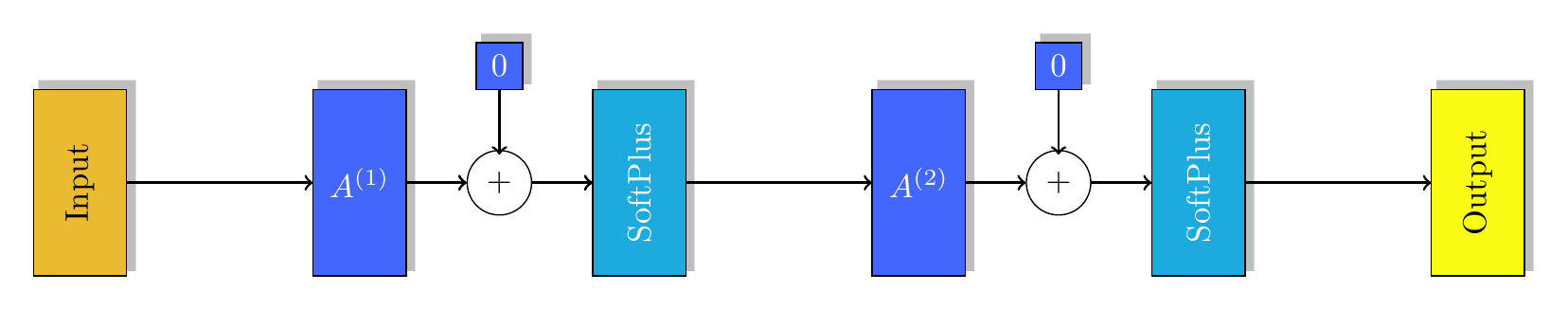}
	\end{center}
	\caption{Structure of the neural network employed in \Cref{eq:example} with weights in \Cref{eq:Exweigths}. The biases are set to 0.}
	\label{fig:example}
\end{figure}
By \Cref{prop_neural_map_equivalence_2}, an equivalence class of $\mathcal{N}$ contains a level curve of $f$, which in this simple case we can explicitly compute: The result is that the level curves abide to the equation $2x_0+x_1 = k$, $k \in \mathbb{R}$. We want to show that the observation above allow us to recover these level curves building the curves of the equivalence classes, whose tangent vector is by definition in $\Ker(g^{(0)})$, $g^{(0)}$ being the pullback a metric $g^{(2)}$ on the output manifold $M_2$ to the input manifold.

Suppose to endow $M_2$ with the Riemannian metric $g^{(2)} = (1)$ -- i.e. we consider $\mathbb{R}$ with its standard scalar product, whose associated matrix is the $1 \times 1$ matrix whose only entry is $1$, \emph{i.e.} the identity operator. After a short computation, we find that the pullback metric $g^{(0)} = \mathcal{N}^* g^{(2)}$  is given by:
\begin{equation}
g^{(0)} = \begin{pmatrix}
\xi \psi^2 & \chi \psi \omega\\ 
\xi \psi \omega & \xi \omega^2
\end{pmatrix}  
\end{equation}
where
\begin{equation}
\xi = \dfrac{16 e^{8y}}{(1+e^{4y})^2}, \quad \psi = \dfrac{2e^{2x_0+x_1}}{1+e^{2x_0+x_1}}, \quad \chi = \dfrac{e^{2x_0+x_1}}{1+e^{2x_0+x_1}}
\end{equation}

In this case, $\Ker(g^{(0)})$ is one-dimensional, since $g^{(0)}$ has rank $1$. A straightforward computation yields that
\begin{equation}
\Ker(g^{(0)}) = \Span \left\{ \begin{pmatrix}
1 \\ 
-2
\end{pmatrix}   \right\}
\end{equation}
By definition a null curve $\gamma(t)=(x_0(t),x_1(t))$ in $M_0$ passing through $(p,q) \in M_0$ satisfies the following system:
\begin{equation}
\begin{cases}
\dot{x}_0 = 1\\
\dot{x}_1 = -2\\
x_0(0) = p\\
x_1(0) = q
\end{cases}
\end{equation}
The unique solution of this system of ODEs is given by
\begin{equation}
\begin{cases}
x_0(t) = t+p\\
x_1(t) = q -2t
\end{cases}, \quad t \in \mathbb{R}
\end{equation}
from which we can find the Cartesian equation of $\gamma$, that is
\begin{equation*}
x_1= -2x_0 + q +2p
\end{equation*}
Comparing this result with the general form of a level curve of $f$ we gave before, the curve $\gamma$ is exactly the level curve passing through $(p,q)$.

\subsection{Exploring the space of weights and biases without changing the output}\label{subsec:exploring_weights}

Up to now, we always considered neural networks with a fixed choice of weights and biases. If we allow them to change, as it happens during the training, the layers $\Lambda_i$ are actually functions of the inputs $x^{(i)}$, of the weights through the matrix $A^{(i)}$ and of the vector of the biases $b^{(i)}$. Therefore, we can consider a neural network as a sequence of maps $\Lambda^{(i)} : M_{i-1} \times \Omega_i \rightarrow M_i$, with $\Omega_i$ the space in which the weights and the biases of the $i$--th layer take values. To stress the dependence on $A^{(i)}$ and $b^{(i)}$, we employ the notation $\Lambda^{(i)}\{A^{(i)},b^{(i)}\}$ to denote a layer with fixed biases and weights given by the vector $b^{(i)}$ and by the matrix $A^{(i)}$ respectively. Now we discuss how to apply the ideas of \Cref{subsec:equivalence,subsec:level} to the space $\Omega_1$ of the weights and biases of the first layer $\Lambda_1$. For simplicity, we limit ourselves to discuss how to explore $\Omega_1$. Suppose to fix the weights and the biases of all the layers but the first one, in which we allow them to vary in $\Omega_1 = \mathbb{R}^k$ for a suitable $k \in \mathbb{N}$ depending on the structure of the layer. Suppose to start with a matrix of weights $A_1$ and a vector of biases $b_1$. Consider a point $p \in M_1$. Since $p$ is a fixed point, while $A_1$ and $b_1$ can vary, we can see the first layer as a map from $\Omega_1$ to $M_1$, namely we consider a neural network as a sequence of maps between manifolds of the form:
\begin{equation}
\begin{tikzcd}
\Omega_{1} \arrow[r, "\Lambda_1"] & M_{1} \arrow[r, "\Lambda_2"] & M_{2} \arrow[r,"\Lambda_4"]  & \cdots \arrow[r,"\Lambda_{n-1}"] & M_{n-1} \arrow[r, "\Lambda_n"] & M_n
\end{tikzcd}
\end{equation}
This sequence is nothing but \eqref{def:sequence_of_maps_intro}, with the difference that we see $\Lambda_1$ as a function of its (non-fixed) weights and biases instead of a function over $M_0$.
We define the class of equivalence $[A^{(1)},b^{(1)}]_p$ of $(A^{(1)},b^{(1)})$ in the space $\Omega_1$ (with respect to the point $p$) as the subset of all the weights and biases $(\tilde{A}^{(1)},\tilde{b}^{(1)})$ such that $\Lambda^{(1)}\{A^{(1)},b^{(1)}\}(p) = \Lambda^{(1)}\{\tilde{A}^{(1)},\tilde{b}^{(1)}\}(p)$. Since the output of the first layer applied to the point $p$ does not change, neither does the output of the whole neural network. For example we may change weights and biases of a neural network whose first layer had weights $A^{(1)}$ and biases $b^{(1)}$ without changing how a point $p \in M_1$ is classified, as long as we take them in the equivalence class $[A^{(1)},b^{(1)}]_p$.

Following the analysis carried out in \Cref{sec:main} and in \Cref{subsec:equivalence}, this time with the space $\Omega_1$ instead of the manifold $M_1$, we know that to build the class of equivalence $[A^{(1)},b^{(1)}]_p$ of a point $(A^{(1)},b^{(1)}) \in \Omega_1$ we have to look for null curves passing through $(A^{(1)},b^{(1)})$.

We illustrate the procedure with the following example.
Consider a shallow neural network
$$
\begin{tikzcd}
M_{0} \arrow[r, "\Lambda_{1}"] & M_{1} \arrow[r, "\Lambda_{2}"] & M_{2}
\end{tikzcd}
$$
with $M_0 = \mathbb{R}^2$, $M_1 = \mathbb{R}$ and $M_2=\mathbb{R}$, such that $\Lambda^{(1)}$ and $\Lambda^{(2)}$ are fully connected softplus layers whose matrices of weights are given by 
\begin{equation*}
A^{(1)} = 
\begin{pmatrix}
a & b \\ 
\end{pmatrix}
\quad 
A^{(2)} = \begin{pmatrix}
c
\end{pmatrix} 
\end{equation*}
and with biases 
\begin{equation*}
b^{(1)} = 
\begin{pmatrix}
d
\end{pmatrix}
\quad
b^{(2)} = 
\begin{pmatrix}
0
\end{pmatrix}
\end{equation*}
To simplify the computations, we do not allow $b^{(2)}$ to change.
Given a point $(x_0,x_1) \in M_0$, we have:
\begin{equation*}
\Lambda_{1}\{A^{(1)},b^{(1)}\}(x_0,x_1) = \ln(1+\exp(ax_0+bx_1+d))
\end{equation*}
and
\begin{equation*}
\Lambda_2\{A^{(2)},b^{(2)}\}(y_0) = \ln(1+\exp(cy_0)), \quad y_0 = \Lambda_{1}\{A^{(1)},b^{(1)}\}(x_0,x_1)
\end{equation*}
Therefore
\begin{equation*}
\mathcal{N}(x_0,y_0) = \ln \left( 1 + \left(1+\exp(ax_0+bx_1+d) \right)^c \right)
\end{equation*}
We want to find all the matrices $\tilde{A}_1$ and all biases $\tilde{b}_1$ such that $\Lambda_1\{A_1,b_1\}(x_0,x_1)=\Lambda_1\{\tilde{A}_1,\tilde{b}_1\}(x_0,x_1)$. 
Now we proceed as in in the example at \cref{subsec:equivalence}, with the difference that the Jacobian $J_{\Lambda^{0}}$ we use to get the pullback of the final metric $g^{(2)}=(1)$ is computed differentiating with respect to the weights and the biases -- this time the point $(x_0,x_1)$ is fixed, while weights and biases are not; Namely we are seeing the neural network as the following map
$$
\begin{tikzcd}
\Omega_{1} \arrow[r, "\Lambda_1"] & M_{1} \arrow[r, "\Lambda_2"] & M_{2}
\end{tikzcd}
$$
After some computations, we get that $\Ker(g_0)$, if $x_0\neq 0$, is given by the vectors of the form
\begin{equation*}
\begin{pmatrix}
- \alpha \dfrac{x_1}{x_0} - \dfrac{\beta}{x_0}\\
\alpha\\
\beta
\end{pmatrix}
\quad
\alpha,\beta \in \mathbb{R}
\end{equation*}
A null curve in $\gamma(t)=(\tilde{a}(t),\tilde{b}(t),\tilde{d}(t))$ in $\Omega_1$ passing through $(A_1,b_1)=(a,b,d)$ is a solution of the following Cauchy problem:
\begin{equation*}
\begin{cases}
\dot{\tilde{a}} = - \alpha \dfrac{x_1}{x_0} - \dfrac{\beta}{x_0}\\
\dot{\tilde{b}} = \alpha\\
\dot{\tilde{d}} = \beta \\
\tilde{a}(0) = a\\
\tilde{b}(0) = b\\
\tilde{d}(0) = d
\end{cases}
\end{equation*}
whose unique solution is given by 
\begin{equation*}
\begin{cases}
\tilde{a}(t) =  -\dfrac{\alpha x_1}{x_0} t - \dfrac{\beta}{x_0}t + a \\
\tilde{b}(t) = \alpha t +b\\
\tilde{d}(t) = \beta t + d
\end{cases} \quad t \in \mathbb{R}
\end{equation*}
Calling $\tilde{A}^{(1)}(t)=(\tilde{a}(t),\tilde{b}(t))$ and $\tilde{b}^{(1)}(t)=(\tilde{d}(t))$, after a short computation we find that
\begin{equation*}
\Lambda_1\{\tilde{A}^{(1)}(t),\tilde{b}^{(1)}(t)\}(x_0,x_1) = \ln(1+\exp(\tilde{a}(t)x_0+\tilde{b}(t)x_1+\tilde{d}(t))) = \ln(1+\exp(ax_0+bx_1+d))
\end{equation*}
for every $t \in \mathbb{R}$. Therefore, changing weights and bias of the first layer inside the class of equivalece $[A^{(1)},b^{(1)}] \subset \Omega_1$ does not change the output of $\Lambda^{(1)}$ and of the neural network.
This topic is worth of further enquiries, as it may be employed in the training process and we are planning to study it in future investigations.

\section{Conclusions}
In this work, motivated by the geometric definition of neural network proposed in \cite{HaRa17} and \cite{Shen18}, we propose a theoretical framework which can be employed to study neural networks. In particular we generalize the main idea of \cite{HaRa17}, allowing a Riemannian metric to be degenerate. Endowing the output manifold with a Riemannian metric and computing its pullback to the input manifold, we obtain a singular Riemannian metric on the latter, which in turns induces a pseudodistance between input points. An interesting property is that the pseudodistance between two input points which are mapped by the network to the same output is zero. Then we considered the Kolmogorov quotient of the input manifold by of the pseudodistance, finding that -- assuming the full rank hypothesis -- the resulting space has the same dimension of the output manifold and that the pullback of the Riemannian metric on the output manifold is still a Riemannian metric. With respect to this quotient, all the input points mapped to the same output belong to the same equivalence class. Furthermore, the singular Riemannian metric induces a natural decomposition of the tangent bundle of the input manifold as a direct sum of a vertical and a horizontal bundle, the latter being an integrable distribution. This fact allow us to employ the singular Riemannian metric of the input manifold to characterize the equivalence classes of the quotient space. 
An interesting example, in a non-linear regression task, is the study of the level curves of the function learned by the neural network, as all the point on a level curve are in the same equivalence class by construction. The theoretical framework introduced in this paper, in addition to give an abstract characterization of the equivalence classes, also yields a way to build such equivalence classes. We note that the results of \cref{sec:main,sec:Application to Neural Networks} suggest a numerical algorithm allowing one to build the preimages in $M_0$ of a point $p \in M_n$ through the map $\mathcal{N}$. In particular, in addition to study the applications  discussed above, it would be interesting to see if this framework can be employed to build the separating hypersurfaces in a classifier neural network. 

We can extend this work in several directions. We can discuss how to extend these results to other kind of neural networks, like convolutional networks -- at least with average pooling. The increased complexity of these kind on neural networks compared to those analysed in \cite{Shen18} and \cite{HaRa17} may give rise to other geometric structures associated with the manifolds of the sequence \eqref{def:sequence_of_maps}, for example some symmetry groups, which can be exploited to gain other information about the way the input manifold is seen by a neural network.  Another interesting line of research is to extend the presented work and verify whether the result of this paper hold true also for some ReLu layers, using a sequence of functions converging to the ReLu activation function.

At last, it would be interesting to endow the final manifold $M_n$ with a pseudoconnection and to study the properties of its pullback to the other manifolds of the sequence and to the quotient manifolds $M_i/\sim_i$, focusing in particular on the curvature and on the choice of a horizontal bundle associated with the resulting pseudoconnection. The study of the curvature of the manifolds of a neural network seen as a sequence of maps could shed some light on the inner working of a neural network.

\section*{Acknowledgments}

We are grateful to Claudio Dappiaggi for the useful discussions. This work has been partially supported by Project Interreg GE.RI.KO.--MERA. 

\begin{appendices}
\appendix
\section{Integrable distributions and vertical bundles.}\label{appendix}
In this appendix we introduce some advanced notions of differential geometry we need in the proofs of \Cref{sec:main}.
We begin with the notion of distribution.
\begin{definition}
A distribution $\mathcal{D}$ of dimension $k$ over a m-dimensional manifold $M$ is a collection of $k$ smooth vector fields $v_1,\cdots,v_k$ such that $(v_1)_p,\cdots,(v_k)_p$ form a basis of a vector subspace of dimension $k$ in $T_p M$ for every $p \in M$.
\end{definition}
Given a manifold of dimension $m$, we can find always find $m$ smooth vector fields $v_1,\cdots,v_m$ such that $T_p M$ is generated by $(v_1)_p,\cdots,(v_k)_p$. However, the converse is not always true: If we take $m$ vector fields $v_1,\cdots,v_m$, it may happen that there is no manifold $M$ such that $v_1,\cdots,v_m$ are generating $T_pM$ for every $p \in M$. When it happens we have an integrable distribution.
\begin{definition}
A distribution $\mathcal{D}$ of dimension $k$ is an integrable distribution if there exist a manifold $M$ of dimension $m \geq k$ such that the collection of $k$ smooth vector fields $v_1,\cdots,v_k$ are generating a vector space of dimension $k$ over $T_p M$ for all $p \in M$.
\end{definition}
Another useful definition is that of (trivial) fiber bundle.
\begin{definition}
A trivial fiber bundle is a structure $(E,B,\pi,F)$, where $E, B$ and $F$ are topological spaces with $E=B\times F$ and the map $\pi : E \rightarrow B$ is the projection of $B \times F$ on $B$. The space $F$ is called typical fiber. In the case $F$ is a vector space, then $(E,B,\pi,F)$ is called a trivial vector bundle.
\end{definition}
When there is no ambiguity, we simply say that $E$ is a fiber bundle. This happens when there is a natural choice for $B,\pi$ and $F$. For example when we say that $TM$ is the tangent bundle over $M$, we implicitly mean that $(M,M \times \mathbb{R}^n,\pi,\mathbb{R}^n)$ is a vector bundle.
\begin{remark}
The general definition of (non-trivial) fiber bundle is quite technical and it is not needed to understand the proofs of \Cref{sec:main}. The interested reader can find it, at least in the case in which $F$ is a vector space, in \cite{Tu11}.
\end{remark}
At last, we introduce the horizontal and the vertical bundles.
\begin{definition} 
Let $(E,M,\pi,F)$ be a vector bundle over a manifold $M$. Then the vertical space $\mathcal{V}_p E$ at $p \in E$ is the vector space $\mathcal{V}_p E=\Ker(d_p\pi) \subset T_pE$. The horizontal space $H_p E$ is a choice of a subspace of $T_p E$ such that $T_p E = \mathcal{V}_p E \oplus \mathcal{H}_p E$. The spaces $\mathcal{V}E:= \sqcup_{p \in E} \mathcal{V}_p E$ and $\mathcal{H}E:= \sqcup_{p \in E} \mathcal{H}_p E$
are two bundles called vertical and horizontal bundles respectively.
\end{definition}
An important property of the vertical bundle is that it is integrable in the following sense. Suppose that $\mathcal{VM}$ is a k-dimensional space, then chosen a collection of $k$ smooth vector fields valued over $M$ which spans $\mathcal{V}_p M$ at every point $p \in M$, we can find a submanifold $\mathcal{V} \subset M$ such that $T\mathcal{V} = \mathcal{V}M$. Suppose now to consider a smooth $m$-dimensional manifold $M$ equipped with a singular Riemannian metric $g$. If $\dim \Ker(g(x)) = r \in \mathbb{R}$ for every $x \in M$, namely $g$ is of constant rank, then \cite{Bel74} the singular metric $g$ induces a splitting of the tangent bundle $TM$ into a vertical and a horizontal bundles of dimensions $r$ and $m-r$ respectively.

\end{appendices}



\bibliographystyle{elsarticle-num} 
\bibliography{biblio.bib}


\end{document}